%% file: kbbq_icml.tex
\documentclass{article}

\usepackage{microtype}
\usepackage{graphicx}
\usepackage{booktabs}
\usepackage{multirow}
\usepackage{hyperref}

\usepackage[preprint]{icml2026}

\usepackage{amsmath,amssymb,amsthm}
\input{math_commands.tex}

\newtheorem{lemma}{Lemma}
\newtheorem{theorem}{Theorem}
\newtheorem{proposition}{Proposition}
\newtheorem{corollary}{Corollary}
\newtheorem{remark}{Remark}
\newtheorem{definition}{Definition}

\usepackage{xcolor}

\icmltitlerunning{KBBQ: A Predictive Noise Law and the Limits of Spectrum Flattening in FP4 Quantization}

\begin{document}

\twocolumn[
\icmltitle{KBBQ: A Predictive Noise Law and the Limits of Spectrum Flattening in FP4 Quantization}

\icmlsetsymbol{equal}{*}

\begin{icmlauthorlist}
\icmlauthor{Lexington Whalen}{sbi}
\icmlauthor{Yuki Ito}{sbi}
\icmlauthor{Ryo Sakamoto}{sbi}
\end{icmlauthorlist}

\icmlaffiliation{sbi}{SB Intuitions, Tokyo, Japan}

\icmlcorrespondingauthor{Lexington Whalen}{lexington.whalen@sbintuitions.co.jp}

\vskip 0.3in
]

\printAffiliationsAndNotice{}

\input{sections/00_Abstract}

\input{sections/01_Introduction}

\input{sections/02_PredictiveNoiseLaw}

\input{sections/03_ApplicationsOfLaw}

\input{sections/04_KBBQ}

\input{sections/05_Experiments}

\input{sections/06_RobustnessAblations}

\input{sections/09_Conclusion}

\clearpage
\bibliographystyle{icml2026}
\bibliography{kbbq}

\clearpage
\appendix
\onecolumn
\setlength{\abovedisplayskip}{4pt plus 2pt minus 2pt}
\setlength{\belowdisplayskip}{4pt plus 2pt minus 2pt}
\setlength{\abovedisplayshortskip}{2pt plus 1pt minus 1pt}
\setlength{\belowdisplayshortskip}{2pt plus 1pt minus 1pt}
\input{sections/A1_DetailedProofs}

\end{document}

%% file: math_commands.tex
\usepackage{amsmath,amsfonts,bm}

\def\eqref#1{equation~\ref{#1}}

\def\1{\bm{1}}

\DeclareMathAlphabet{\mathsfit}{\encodingdefault}{\sfdefault}{m}{sl}
\SetMathAlphabet{\mathsfit}{bold}{\encodingdefault}{\sfdefault}{bx}{n}

\newcommand{\E}{\mathbb{E}}

\DeclareMathOperator{\Tr}{Tr}

%% file: sections/00_Abstract.tex
\begin{abstract}

We develop a second-order theory of quantization noise in matrix multiplication in which the quantization format is characterized by the variance it assigns to each element. The constant variance profile of integer quantization recovers existing integer-noise theory, while the multiplicative profile of floating-point rounding reduces the data dependence to a scalar, the participation factor $\kappa$, yielding a closed-form signal-to-noise-ratio law.  The resulting functional also admits a closed-form upper bound $\kappa^{*}$ that, under a matched weight ensemble, no function-preserving linear transform can exceed and that is attained by a recent state-of-the-art method. Building on this analysis, we introduce KBBQ (\textbf{K}appa-\textbf{B}raked \textbf{B}lockwise \textbf{Q}uantization), which parameterizes the extent to which a transform approaches this ceiling. At W4A4, across four base models and two FP4 formats, KBBQ outperforms the prior state of the art on seven of eight model--format pairs without additional deployment-time computation relative to that method.

\end{abstract}

%% file: sections/01_Introduction.tex
\section{Introduction}

Quantization replaces a tensor's entries with values from a small grid, stored
against a scale shared by a group of entries, so that the matrix
multiplications dominating inference can run in low-precision arithmetic.
Post-training methods \citep{frantar2023gptq,lee2026kronq} fix the grid and round onto it, so the error incurred
is set by how the grid spaces its representable values relative to the data. An integer (INT) grid
is uniform, with a step fixed by the largest magnitude in the group: every
element is rounded with the same absolute error, and one outlier coarsens the
grid for all of them. A floating-point (FP) grid allocates its bits on an exponent and
a short mantissa, so the spacing grows with the value and the error is instead
roughly proportional to the element itself.

Four-bit arithmetic is now available in hardware in both integer and
floating-point form, and a
growing line of work carries the same formats into pretraining and optimizer
state \citep{chmiel2025fp4alltheway,castro2025quartet,ashkboos2025halo,%
ding2026fullstackfp4}. The floating-point formats deployed at this width are
block-scaled: MXFP4 \citep{rouhani2023microscaling} shares one scale across 32
elements and NVFP4 \citep{nvidia2025nvfp4} across 16, so a block's largest
magnitude sets the shared scale that every element in it is rounded against. At four bits the mantissa is one bit wide, leaving little
headroom anywhere in the grid, and quantizing weights and activations both to
this width degrades quality substantially unless the tensors are made easier
to round.

Typically, a preprocessing transform is applied and merged into
the network so that the computed function is unchanged. This is often performed by an orthogonal rotation,
which spreads outlying coordinates across the remaining ones
\citep{chee2023quip,ashkboos2024quarot,tseng2024quipsharp,liu2024spinquant,lin2024duquant},
or a diagonal rescaling, which migrates outliers between the two operands
\citep{xiao2023smoothquant,lin2026awqactivationawareweightquantization}.

For integer formats these transforms reliably serve to reduce the impact of quantization noise. However, under
block-scaled floating-point they are less predictable. A data-free rotation may improve
accuracy slightly, do nothing, or reduce it, with the outcome varying by model
and by format. Previous literature has responded by designing rotations for
this setting specifically: a two-level orthogonal rotation aimed at MXFP4
\citep{xu2026torq}, a Hadamard-preconditioned adaptive rotation
\citep{zagitov2026harp}, and rotations optimized so that activations align
with the corners of the quantization grid \citep{thrash2026conqur}.

From three standard assumptions on quantization noise we obtain a
single functional for the expected noise energy of a quantized dot product, in which the format enters only through the elementwise variance profile of its grid.
Substituting the constant profile recovers existing integer theory, under which
noise is dependent on the amplitude each scale group must span; substituting the
multiplicative profile yields a closed-form SNR law whose only data-dependent
term is the ratio of the squared sum of the elementwise products to the sum of
their squares. This ratio measures how much a dot product gains from
constructive alignment across coordinates against the noise accumulating
independently in each. We call this ratio the participation factor $\kappa$, after
the analogous quantity in the localization literature
\citep{thouless1974electrons,kramer1993localization}.

Through our derivations, we show that INT and FP respond to different properties of the operands, helping to explain why transform behavior is format-dependent. 
A rotation flattens amplitudes and thereby reduces integer quantization noise, but amplitude does not appear in the floating-point law. 
For floating-point formats, the role of a transform is instead to redistribute energy across coordinates so as to increase $\kappa$.
$\kappa$ has a closed-form ceiling $\kappa^{*}$, computable per layer from second-order statistics alone, that no function-preserving transform can exceed in the ensemble aggregate.

The ceiling can be viewed as an idealized reference point for the floating-point law. In practice, deployed quantizers may differ from the assumptions underlying this bound. These practical details do not change $\kappa^{*}$, which is determined by the layer's second-order statistics, but they motivate treating the ceiling as a theoretical limit rather than a requirement.
We therefore modify the state-of-the-art construction of~\citet{chen2025wush} to introduce KBBQ (\textbf{K}appa-\textbf{B}raked \textbf{B}lockwise \textbf{Q}uantization), which provides a flexible family of transforms that interpolates between a weight-whitening transform and the idealized construction attaining $\kappa^{*}$. This allows us to study how the benefits of the transform vary as we move toward the theoretical ceiling, while requiring no changes to the kernel, storage layout, or inference arithmetic. Our contributions are as follows.

\setlength{\itemsep}{1pt}\setlength{\parskip}{0pt}\setlength{\topsep}{2pt}
\begin{itemize}
    \item \textbf{A format-agnostic noise functional.} We derive a single
    noise functional for quantized matrix multiplication from three
    assumptions on quantization noise, and recover recent integer
    \citep{federici2026dissecting} and floating-point \citep{chen2025wush}
    theories as the constant and multiplicative variance profiles.

    \item \textbf{Closed-form SNR laws.} Using our noise functional, we predict
    and verify the SNR of quantized matrix multiplication on six model
    families in two formats.

    \item \textbf{The participation factor and its ceiling.} We show that the
    numerator of $\kappa$ is invariant under invertible function-preserving
    transforms, and derive the closed-form ceiling $\kappa^{*}$.

    \item \textbf{KBBQ.} Attaining $\kappa^{*}$ maximizes only the first-order
    term of the functional, and does so by inverting the root of an estimated
    spectrum. We expose the distance traveled toward it as a one-parameter
    family, show that the best operating point is interior, and evaluate it at
    W4A4 on four base models in two FP4 formats.
\end{itemize}

%% file: sections/02_PredictiveNoiseLaw.tex
\section{A Noise Law for Floating-Point Quantization}

We now derive a general noise law for quantized dot products, show that existing
integer and floating-point theories are recovered by substituting two choices of
the variance profile, convert the floating-point case into a closed-form SNR
expression, and bound the improvement available to any function-preserving
transform.

\subsection{Setup and Assumptions}
\label{sec:assumptions}

A GEMM is a grid of dot products. We call one such matrix multiplication ---
carrying its own weight matrix and its own activation second moment --- a
\emph{cell}; nothing in what follows couples cells, so the law is stated and
measured per cell. Consider one such dot product,
\begin{equation}
    S \;=\; \sum_{k=1}^{D} w_k x_k .
\end{equation}
Quantization replaces each operand with a perturbed version, $\hat w_k =
w_k + \delta w_k$ and $\hat x_k = x_k + \delta x_k$, so the computed value
is $\hat S = S + \delta S$. To first order,
\begin{equation}
    \delta S \;=\; \sum_k \big( x_k\,\delta w_k + w_k\,\delta x_k \big)
    \;+\; O(\delta^2).
    \label{eq:firstorder}
\end{equation}
We make three assumptions on the rounding errors:
\begin{itemize}
\item[\textbf{A1}] \textbf{(Unbiased.)} Conditional on the operand,
each error has zero mean: $\E[\delta w_k \mid w_k] = 0$ and
$\E[\delta x_k \mid x_k] = 0$.
\item[\textbf{A2}] \textbf{(Uncorrelated.)} Errors are uncorrelated
across coordinates and across the two operands:
$\E[\delta w_j,\delta w_k] = 0$ for $j \ne k$, likewise for $\delta
x$, and $\E[\delta w_k,\delta x_k] = 0$.
\item[\textbf{A3}] \textbf{(Variance function.)} The error variance of
an element depends only on that element via functions $\varphi_w$ and $\varphi_x$:
$\E[\delta w_k^2 \mid w_k, x_k] = \varphi_w(w_k)$ and
$\E[\delta x_k^2 \mid x_k, w_k] = \varphi_x(x_k)$.
\end{itemize}

These are standard assumptions of quantization-noise analysis
\citep{widrow1996statistical,federici2026dissecting,chen2025wush}.

\begin{figure*}[t]
\centering
\includegraphics[width=\textwidth]{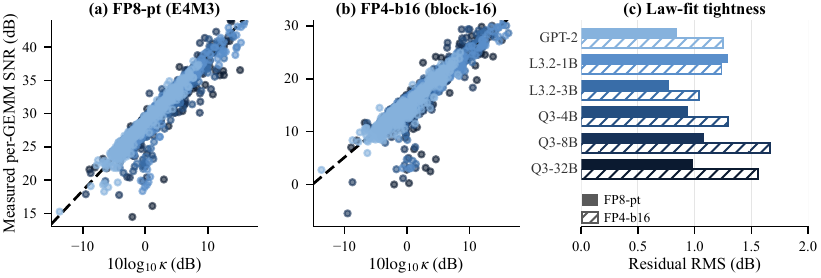}
\caption{The noise law per GEMM cell. (a) Per-tensor E4M3 and (b) block-16
E2M1: measured SNR against $10\log_{10}\kappa$ across the six families of
Table~\ref{tab:lawfits}. The dashed line is Eq.~\ref{eq:kappalaw} at unit
slope per decade with the intercept at the measured median
$C_{\mathrm{fmt}}$; for E4M3 that median lies within $0.1$~dB of the
parameter-free prediction of \S\ref{sec:derivation}. (c) RMS of the
residual $r$ (Eq.~\ref{eq:residual}) per family and format, the statistic
in which the shared numerator $\E[S^2]$ cancels.}
\label{fig:law}
\end{figure*}

\subsection{The Noise Law}
\label{sec:derivation}

Squaring Eq.~\ref{eq:firstorder} and applying A1--A3 eliminates all cross terms (Appendix~\ref{app:derivation}) and yields
\begin{equation}
\E[\delta S^2] \;=\; \sum_k \Big( \E[x_k^2]\,\varphi_w(w_k)
\;+\; \E[w_k^2]\,\E[\varphi_x(x_k)] \Big).
\label{eq:functional}
\end{equation}
The variance functions enter asymmetrically because the weights are fixed while the activations are random: $\varphi_w(w_k)$ is a deterministic quantity, whereas $\varphi_x(x_k)$ is a random variable. Eq.~\ref{eq:functional} is otherwise format-agnostic, as its derivation uses only A1--A3. Substituting the corresponding variance functions recovers the existing theories for integer and floating-point quantization.

\paragraph{Integer profile.} A uniform grid with step $\Delta$ assigns
every element the same error variance, $\varphi(a) = \Delta^2/12$
\citep{widrow1996statistical}. Substituting,
\begin{equation}
    \E[\delta S^2]
    \;=\; \frac{\Delta_w^2}{12}\sum_k \E[x_k^2]
    \;+\; \frac{\Delta_x^2}{12}\sum_k \E[w_k^2].
    \label{eq:intlaw}
\end{equation}
Noise is set by the grid steps and the steps by the largest magnitude each
scale must cover, $\Delta \propto \mathrm{amax}/2^{b}$, thus a single outlying
element coarsens the grid for every element sharing its scale.

\paragraph{Floating-point profile.} A floating-point grid is uniform within each exponent band and doubles its step at each band boundary, so the local step scales with the magnitude of the value. The resulting variance function is multiplicative,
\begin{equation}
\varphi(a) \;=\; \sigma^2_{\mathrm{fmt}}\, a^2,
\label{eq:fpprofile}
\end{equation}
where $\sigma^2_{\mathrm{fmt}}$ depends only on the format's mantissa grid (Appendix~\ref{app:formats} derives $\sigma^2_{\mathrm{fmt}}$ for each format). Substituting,
\begin{equation}
\E[\delta S^2] \;=\; \big(\sigma^2_w + \sigma^2_x\big)
\sum_k \E\big[(w_k x_k)^2\big].
\label{eq:fplaw}
\end{equation}
The noise therefore depends only on the second moments of the elementwise products, while the signal $\E[S^2]$ captures their alignment. Their ratio depends on the data through the participation factor $\kappa$.

\begin{definition}[Participation factor]
\label{def:kappa}
For a dot product with terms $w_k x_k$, we define $\kappa$, the participation factor, as
\begin{equation}
    \kappa \;=\; \frac{\E[S^2]}{\sum_k \E[(w_k x_k)^2]}
    \;\in\; [0,\, D],
\end{equation}
with the range and its endpoint conditions established in
Proposition~\ref{prop:range}.
\end{definition}

Dividing signal by noise in Eq.~\ref{eq:fplaw} and taking decibels gives
the law:
\begin{equation}
    \boxed{\;
    \begin{aligned}
    \mathrm{SNR}_{\mathrm{dB}} &\;=\; 10\log_{10}\kappa \;+\; C_{\mathrm{fmt}}, \\[2pt]
    C_{\mathrm{fmt}} &\;=\; -10\log_{10}\big(\sigma^2_w + \sigma^2_x\big).
    \end{aligned}
    \;}
    \label{eq:kappalaw}
\end{equation}

Using eq.~\ref{eq:kappalaw}, we can separate the two sources of variation. $\kappa$
carries all dependence on the network, $C_{\mathrm{fmt}}$ all dependence on
the format. Proposition~\ref{prop:interceptgap} predicts the spacing between
two formats' intercepts from their mantissa bit-width difference alone. Formats that also differ
in scaling granularity carry a further offset, which
Appendix~\ref{app:formats} treats. A method that moves only
amplitude statistics therefore has no mechanism by which to help a
floating-point format, and a method that moves only alignment has none by which
to help an integer format.

\paragraph{Measurement.} We evaluate Eq.~\ref{eq:kappalaw} per GEMM cell on
six model families spanning 124M to 32B parameters --- GPT-2 124M
\citep{radford2019language}, Llama-3.2-1B and Llama-3.2-3B
\citep{grattafiori2024llama3}, and Qwen3-4B, 8B and 32B
\citep{yang2025qwen3} --- in per-tensor E4M3 and block-16 E2M1 (per-family
cell and record counts are given in Table~\ref{tab:lawfits},
Appendix~\ref{app:lawfits}). A cell here is
one of the four fused GEMMs of a transformer block --- fused QKV, output,
fused gate/up, and down --- sampled across the blocks of each model. We
measure both sides of Eq.~\ref{eq:kappalaw}. $\kappa$ is accumulated from the
captured operands and $\mathrm{SNR}_{\mathrm{dB}}$ from the realized rounding
error. Each cell is measured in isolation. The activations entering it are
captured from a full-precision forward pass, both operands are then rounded
onto the format's grid, and $\E[\delta S^2]$ is accumulated against the
unquantized product. The resulting error is due solely to quantization within the cell, with no contribution from upstream quantized layers.
\S\ref{sec:experiments} reports the end-to-end W4A4 setting. At fixed
format, the law asserts that
\begin{equation}
    r \;:=\; \mathrm{SNR}_{\mathrm{dB}} - 10\log_{10}\kappa
    \;=\; 10\log_{10} \frac{\sum_k \E[(w_k x_k)^2]}{\E[\delta S^2]}
    \label{eq:residual}
\end{equation}
, in other words, the noise energy of a dot product measured against its incoherent energy
, equals $C_{\mathrm{fmt}}$ for every cell of every model.
Figure~\ref{fig:law} plots both sides per cell and table~\ref{tab:lawfits}
(Appendix~\ref{app:lawfits}) gives the per-family fits. 
\paragraph{Intercept.} Across the six families the per-tensor E4M3 intercept
has median $28.42$~dB with a standard deviation of $0.10$~dB and a full range
of $0.27$~dB, so a single number per format, carried unchanged across models,
fixes the level of $r$ to within a few tenths of a decibel, highlighting the format-invariance predicted by the theory. The per-cell
scatter about it is $0.97$~dB at the median family and $1.29$~dB at worst. Taking the within-band position of
the operand to be log-uniform gives $\E[1/m^2] = 3/(8\ln 2) = 0.541$, so
Eq.~\ref{eq:sigmafmt} at $p = 3$ mantissa bits gives
$\sigma^2_{\mathrm{fmt}} = 7.04 \times 10^{-4}$ and
$C_{\mathrm{fmt}} = -10\log_{10}(2\sigma^2_{\mathrm{fmt}}) = 28.51$~dB against
a measured median of $28.42$~dB, an agreement of $0.09$~dB.

\paragraph{Cross-format spacing.} The two formats are measured on the same
cells, and $\kappa$ is a property of the cell rather than of the format, so
differencing their SNRs leaves a ratio of two measured noise energies on one
dot product:
\begin{equation}
    \mathrm{SNR}^{\mathrm{fp8}}_{\mathrm{dB}}
    - \mathrm{SNR}^{\mathrm{fp4}}_{\mathrm{dB}}
    \;=\; 10\log_{10}
    \frac{\E[\delta S^2]_{\mathrm{fp4}}}{\E[\delta S^2]_{\mathrm{fp8}}}
    \;=\; C_{\mathrm{fp8}} - C_{\mathrm{fp4}} .
    \label{eq:fmtgap}
\end{equation}
Proposition~\ref{prop:interceptgap} predicts this from mantissa arithmetic
alone, yielding $6.02\,(3 - 1) = 12.04$~dB. The measured median over the 3968 records is
$13.24$~dB, with per-family medians spanning $13.07$ to $13.81$~dB. The
$1.2$~dB excess is consistent with the confound the pair carries: the two
formats differ in scaling granularity, per-tensor against block-16, as well as
in mantissa width.

\subsection{Connection to Prior INT Theories}
\label{sec:catconnection}

CAT \citep{federici2026dissecting} gives an integer SNR theory of the form
$\mathrm{SNR} = 12\,\big[N(b_x)^2 C(x) \,\|\, N(b_w)^2 C(W)\big]\, A$, where
$a \,\|\, b = (a^{-1} + b^{-1})^{-1}$ is the parallel combination, $N(b)$
counts quantization intervals, the concentration terms $C(\cdot)$ measure each
tensor's energy against its quantization range, and $A$ is an alignment term.
This is Eq.~\ref{eq:functional} at $\varphi = \Delta^2/12$: CAT assumes the
same uniform-in-cell rounding model, with interval size $s = r/(2^{b}-1)$ and
$\E[\delta x\,\delta x^\top] = I\,\E[r^2]/12(2^{b}-1)^2$, which is the constant
profile written against the quantization range $r$ rather than the amax.
Writing any uniform grid's step as its range over its interval count,
$\Delta = r/N(b)$, the elementwise SQNR of a single tensor is
\begin{equation}
    \frac{\E[x^2]}{\Delta^2/12}
    \;=\; 12\;
    \underbrace{N(b)^{2}}_{\text{grid geometry}}\;
    \underbrace{\frac{\E[x^2]}{r^{2}}}_{\text{operand statistics}\,=\,C(\cdot)},
    \label{eq:catfactors}
\end{equation}
which is CAT's $N(b)^2 C(\cdot)$, factor for factor. CAT cuts the range into
$N(b) = 2^b - 1$ intervals, whereas \S\ref{sec:derivation} places $2^b$ levels
across it, $\Delta = 2\,\mathrm{amax}/2^{b}$. Eq.~\ref{eq:functional}
reproduces either quantizer exactly once that quantizer's own $\Delta$ is
substituted. The two channels of Eq.~\ref{eq:intlaw} are additive in noise
power and so compose reciprocally in SNR, which is the $\|$ of CAT's theorem.
Appendix~\ref{app:cat} gives both steps in full, and relates the remaining
factor $A$ to $\kappa$.

\subsection{Connection to Prior FP Theories}
\label{sec:wushconnection}

WUSH \citep{chen2025wush} adopts the same unbiased relative-error model as
our A1--A3 for floating-point types and solves in closed form for the
invertible transform pair minimizing the resulting loss, working blockwise at
block size $d$. Their integer model sets the error by the group maximum,
$\varepsilon(\alpha) = \|\alpha\|_\infty\eta$, which is a group functional
rather than a function of the element; A3 covers this in the same sense the
constant profile does, with $\varphi$ fixed within a scale group and the amax
setting its level. With $W'$
and $X'$ the weight and activation root factors, $H$ a normalized Hadamard
matrix, and
\begin{equation}
    \begin{aligned}
    W'^\top X' &\;=\; U S V^\top, \qquad S = \mathrm{diag}(s_1,\dots,s_d), \\[2pt]
    T_{\mathrm{WUSH}} &\;=\; H\, S^{-1/2}\, U^\top W'^\top ,
    \end{aligned}
    \label{eq:wushmain}
\end{equation}
their optimum flattens the paired spectrum $S$ completely.  Appendix~\ref{app:wush} records the
derivation. We shall use two of its consequences here. First, transforms that are
orthogonal in their reparameterized coordinates leave their objective
unchanged (their Eq.~(21) and \S4.2.3). In operand coordinates that class is
$T = R\,W'^\top$ with $R$ orthogonal, which whitens the transformed weight
ensemble; it contains neither the identity nor a rotation of the raw
operands unless $\Sigma$ is already white. An operand-space rotation
moves the objective --- equivalently the aggregate participation factor of
Theorem~\ref{thm:glceiling} --- through the diagonal of $R\Sigma R^\top$,
We measure this in \S\ref{sec:rotations}. Second,
their construction equalizes the transformed blockwise second moment so that
its diagonal is constant, which is the equal-diagonal condition that arises
independently as the equality case of Theorem~\ref{thm:glceiling}.

WUSH's optimum and the ceiling of Theorem~\ref{thm:glceiling} are related via the largest factor by which any transform can reduce FP error in a block. This is their orthogonal-transform loss divided by their optimal loss, 
\begin{equation}
    \frac{\operatorname{tr}(S^2)}{d^{-1}(\operatorname{tr} S)^2}
    \;=\; \frac{d\,\operatorname{tr}(S^2)}{(\operatorname{tr} S)^2}
    \;\in\; [1,\,d],
    \label{eq:wushratiomain}
\end{equation}
their Eqs.~(20) and~(21), with their Eq.~(17) placing it in $[1,d]$. Under the
matched-ensemble convention
$\E[ww^\top] = c\,\Sigma$ of Theorem~\ref{thm:glceiling} we have
$W'^\top X' = \sqrt{c}\,\Sigma$, so $s_i = \sqrt{c}\,\lambda_i(\Sigma)$. The
constant cancels and Eq.~\ref{eq:wushratiomain} becomes
\begin{equation}
\kappa^{*} \;=\; \frac{D\,\Tr(\Sigma^2)}{(\Tr\,\Sigma)^2},
\label{eq:glceiling}
\end{equation}
the ceiling of Theorem~\ref{thm:glceiling}. Their transform collapses with it:
the diagonal factors of Eq.~\ref{eq:wushmain} cancel entrywise, leaving
$T_{\mathrm{WUSH}} = c^{1/4}HU^\top$, a scalar times an orthogonal matrix.
$HU^\top$ meets both conditions Theorem~\ref{thm:glceiling} places on a
transform attaining $\kappa^{*}$: it is orthogonal, and it leaves
$R\Sigma R^\top$ with a constant diagonal. The convention also defines $\kappa^{*}$ as an upper bound on the ensemble aggregate $\bar\kappa$. A $\kappa$ measured from the actual weights of an individual trained cell may exceed this bound.

%% file: sections/03_ApplicationsOfLaw.tex
\section{Applications of the Noise Law}
We apply the two profiles of the noise law to the question of when a
preprocessing rotation helps. Rotations are reliably useful for integer
grids and weakly useful or harmful for floating-point ones.

\begin{table}[t]
\centering
\footnotesize
\caption{Native $\kappa_I$ against the Haar attractor
$\bar\kappa_{\mathrm{rot}}$ (medians over cells). The ratio column is the
median over cells of the per-cell ratio $\bar\kappa_{\mathrm{rot}}/\kappa_I$,
and dex is its $\log_{10}$; negative values place the native model above the
attractor.}
\label{tab:kappamargins}
\setlength{\tabcolsep}{4pt}
\begin{tabular}{@{}lccc@{}}
\toprule
Model & $\kappa_I$ & $\bar\kappa_{\mathrm{rot}}$ & $\bar\kappa_{\mathrm{rot}}/\kappa_I$ (dex) \\
\midrule
Llama-3-8B   & 1.550 & 1.346 & 0.973 ($-0.012$) \\
Llama-3.2-3B & 1.595 & 1.262 & 0.933 ($-0.030$) \\
Qwen3-4B     & 1.611 & 1.301 & 0.850 ($-0.071$) \\
Qwen3-8B     & 1.771 & 1.266 & 0.894 ($-0.049$) \\
\bottomrule
\end{tabular}
\end{table}

\subsection{When Rotations Help}
\label{sec:rotations}

\paragraph{INT.}
Under Eq.~\ref{eq:intlaw} noise is set entirely by the grid step and the step
by amplitude, $\Delta \propto a_{\max}/2^b$. A rotation leaves $\E[S^2]$
(Lemma~\ref{lem:invariance}) and the total energy unchanged but redistributes
that energy from a few outlying coordinates across all $D$; since $a_{\max}$
is set by the largest coordinate, flattening shrinks $\Delta$ and lowers the
noise for every element sharing the scale, not only the formerly large ones.
Rotations help integer quantization because the integer law depends on
amplitude and rotations are amplitude-flattening operations.

\paragraph{FP.}
The floating-point case differs because the profile is multiplicative in the
operand,
\[
\varphi(a) = \sigma_{\mathrm{fmt}}^2 a^2,
\]
so $a_{\max}$ never enters Eq.~\ref{eq:fplaw}. The floating-point law is
sensitive only to $\kappa$: a rotation can move $\kappa$ only by changing how
energy is distributed across coordinates relative to one another
(Theorem~\ref{thm:glceiling}), and it cannot move $\E[S^2]$
(Lemma~\ref{lem:invariance}). The gap $\kappa^{*} - \kappa$
bounds what any function-preserving transform can recover
(Eq.~\ref{eq:glceiling}).

\paragraph{Native $\kappa$ against the Haar attractor.}
A random rotation resets $\kappa$ to a fixed value
$\bar\kappa_{\mathrm{rot}}$ rather than to $\kappa^{*}$, so the effect of rotation
depends on where the untransformed network lies relative to
$\bar\kappa_{\mathrm{rot}}$. Table~\ref{tab:kappamargins} reports $\kappa_I$
and $\bar\kappa_{\mathrm{rot}}$ over 63 stratified GEMM cells for each of the
four benchmarked models, none of which was trained with a quantization-aware
objective. On every model, the median cell lies $0.012$--$0.071$~dex above
the attractor, indicating that trained FP networks already have higher
$\kappa$ than a generic random rotation induces.

\begin{figure}[t]
\centering
\includegraphics[width=\columnwidth]{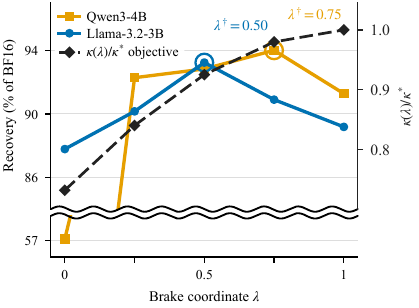}
\caption{Objective against outcome under the brake. Recovery, the five-task
average as a percentage of the BF16 anchor (the Recovery column of
Table~\ref{tab:lambda_grid}), against $\lambda$ at NVFP4 in the RTN
setting; the idealized objective $\kappa(\lambda)/\kappa^{*}$ is on the right axis. It rises monotonically to $1.00$ while both
models turn over before it. Rings mark each model's sweep optimum
($\lambda = 0.75$ on Qwen3-4B-Base,
$\lambda = 0.50$ on Llama-3.2-3B-Base).}
\label{fig:brake}
\end{figure}

A rotation applied to a network in this state has little to gain: a dense
data-free Haar rotation (HAAR, one of the methods of
\S\ref{sec:experiments}) does not exceed the identity method on any
model--format pair in Tables~\ref{tab:w4a4_arms_all}
and~\ref{tab:w4a4_arms_8b}.

%% file: sections/04_KBBQ.tex
\begin{table*}[!t]
\centering
\caption{W4A4 accuracy results using our evaluation harness for Llama-3.2-3B-Base and Qwen3-4B-Base under different quantization techniques. Best technique results for each quantization format are \textbf{bolded}, second-best \underline{underlined}.}
\label{tab:w4a4_arms_all}
\resizebox{\textwidth}{!}{%
\begin{tabular}{c|c|l|ccccc|cc}
\toprule
Model & Format & Method & MMLU & GSM8K & HellaSwag & WinoGrande & MBPP & Average & Recovery \\
\midrule
\multirow{17}{*}{\rotatebox{90}{Llama-3.2-3B-Base}}
& BF16 & - & 56.30 & 25.63 & 74.41 & 59.35 & 34.41 & 50.02 & 100 \\
\cmidrule{2-10}
& \multirow{8}{*}{NVFP4}
& I & 51.23 & 16.30 & 70.32 & 56.12 & 29.18 & 44.63 & 89.2 \\
& & HAAR & 48.52 & 15.69 & 68.26 & 54.06 & 24.14 & 42.13 & 84.2 \\
& & RTN-WUSH & 51.48 & 17.29 & 71.54 & 55.80 & 26.96 & 44.61 & 89.2 \\
& & RTN-KBBQ & 52.35 & 21.83 & 71.88 & 57.54 & 29.58 & \underline{46.64} & \underline{93.2} \\
& & GPTQ & 51.01 & 19.79 & 70.81 & 58.25 & 27.16 & 45.40 & 90.8 \\
& & MR-GPTQ & 48.83 & 18.04 & 69.38 & 57.62 & 26.76 & 44.13 & 88.2 \\
& & GPTQ-WUSH & 53.04 & 20.70 & 72.11 & 58.56 & 27.57 & 46.39 & 92.7 \\
& & GPTQ-KBBQ & 53.43 & 22.14 & 72.33 & 58.56 & 31.79 & \textbf{47.65} & \textbf{95.3} \\
\cmidrule{2-10}
& \multirow{8}{*}{MXFP4}
& I & 46.03 & 10.92 & 67.93 & 55.72 & 22.33 & 40.59 & 81.1 \\
& & HAAR & 40.51 & 11.22 & 64.06 & 53.12 & 16.90 & 37.16 & 74.3 \\
& & RTN-WUSH & 50.00 & 15.85 & 69.91 & 56.59 & 26.76 & 43.82 & 87.6 \\
& & RTN-KBBQ & 51.31 & 17.13 & 70.30 & 57.62 & 23.94 & 44.06 & 88.1 \\
& & GPTQ & 45.79 & 13.27 & 67.47 & 55.41 & 23.34 & 41.06 & 82.1 \\
& & MR-GPTQ & 47.67 & 13.19 & 67.67 & 55.56 & 19.11 & 40.64 & 81.2 \\
& & GPTQ-WUSH & 50.28 & 17.29 & 71.05 & 58.64 & 28.37 & \underline{45.13} & \underline{90.2} \\
& & GPTQ-KBBQ & 50.68 & 18.27 & 70.57 & 57.85 & 28.77 & \textbf{45.23} & \textbf{90.4} \\
\midrule
\multirow{17}{*}{\rotatebox{90}{Qwen3-4B-Base}}
& BF16 & - & 72.89 & 80.21 & 73.15 & 61.72 & 65.39 & 70.67 & 100 \\
\cmidrule{2-10}
& \multirow{8}{*}{NVFP4}
& I & 68.99 & 69.90 & 69.39 & 59.04 & 51.91 & 63.85 & 90.3 \\
& & HAAR & 67.73 & 59.82 & 68.76 & 58.88 & 52.52 & 61.54 & 87.1 \\
& & RTN-WUSH & 69.36 & 69.07 & 70.38 & 57.38 & 56.34 & 64.51 & 91.3 \\
& & RTN-KBBQ & 70.10 & 74.83 & 70.13 & 58.80 & 58.35 & \underline{66.44} & \underline{94.0} \\
& & GPTQ & 69.17 & 74.22 & 70.12 & 58.25 & 58.35 & 66.02 & 93.4 \\
& & MR-GPTQ & 68.82 & 70.96 & 70.07 & 57.93 & 54.53 & 64.46 & 91.2 \\
& & GPTQ-WUSH & 69.66 & 72.78 & 70.73 & 59.04 & 54.33 & 65.31 & 92.4 \\
& & GPTQ-KBBQ & 70.27 & 76.57 & 70.97 & 59.67 & 56.74 & \textbf{66.85} & \textbf{94.6} \\
\cmidrule{2-10}
& \multirow{8}{*}{MXFP4}
& I & 63.42 & 58.00 & 67.05 & 56.98 & 43.46 & 57.78 & 81.8 \\
& & HAAR & 63.27 & 47.69 & 65.06 & 55.01 & 48.49 & 55.90 & 79.1 \\
& & RTN-WUSH & 66.92 & 64.90 & 68.18 & 56.83 & 51.91 & 61.75 & 87.4 \\
& & RTN-KBBQ & 67.48 & 70.58 & 68.97 & 57.70 & 51.51 & 63.25 & 89.5 \\
& & GPTQ & 64.26 & 64.97 & 67.22 & 57.30 & 47.69 & 60.29 & 85.3 \\
& & MR-GPTQ & 67.80 & 66.57 & 68.82 & 57.93 & 51.91 & 62.61 & 88.6 \\
& & GPTQ-WUSH & 68.21 & 71.34 & 69.40 & 59.27 & 55.33 & \underline{64.71} & \underline{91.6} \\
& & GPTQ-KBBQ & 68.81 & 73.77 & 70.34 & 58.80 & 55.73 & \textbf{65.49} & \textbf{92.7} \\
\bottomrule
\end{tabular}%
}
\end{table*}

\section{KBBQ: A Braked Optimum}
\label{sec:kbbq}

\citet{chen2025wush}
provide a closed-form construction to reach the result of Theorem~\ref{thm:glceiling}.
However, this
relies on assumptions A1--A3 and on access to the population second moment,
and a deployed quantizer may satisfy none of these conditions. It may round
deterministically, violating A1; share a single scale across each
quantization block, which correlates errors within the block and violates
A2; or observe $\Sigma$ only through $n$ calibration rows. While the
bound itself remains valid under all three departures, the maximizer of the
idealized objective is no longer guaranteed to be the best operating point
in practice. This motivates our approach: rather than committing to the
ceiling, we expose the distance traveled toward it as a free parameter.

\subsection{The brake}
\label{sec:kbbq_assumptions}

Treating the flattening exponent as free, $\Lambda^{-\lambda/4}$, yields a
one-parameter family that interpolates between leaving the estimated
spectrum untouched and inverting its square root completely. Because the
transform acts on both operands, the paired spectrum presented to the
quantizer is $s_i^{\,1-\lambda}$; the brake therefore contracts the log
spectrum linearly, $\log s_i \mapsto (1-\lambda)\log s_i$, from the
untransformed spectrum at $\lambda = 0$ to a fully flattened one at
$\lambda = 1$. Along this path the idealized objective is monotone, with
$\kappa(\lambda)/\kappa^{*}$ increasing from $0.73$ to $1.00$
(\S\ref{sec:abl_lambda}), so under A1--A3 the optimum lies at the endpoint
$\lambda = 1$ and no interior value can improve on it. Consequently, an
interior optimum in practice indicates a gap between these assumptions and
the behavior of a deployed quantizer.
This is precisely what we observe: accuracy peaks before $\lambda = 1$ on
both models we sweep, while $\lambda = 0$---which applies the basis change
and root factor without any flattening---is the weakest setting in our
sweep. 

\subsection{Construction}
\label{sec:kbbq_construction}

We work blockwise with the transform block matched to the quantization group,
following \citet{chen2025wush}. Fix a block; let $W'$ and $X'$ be the root
factors of \S\ref{sec:wushconnection} and $U \Lambda U^\top$ the
eigendecomposition of the paired object whose spectrum the optimal transform
flattens, so $\Lambda = S^2$ for the paired spectrum $S$ of
Eq.~\ref{eq:wushmain}. KBBQ modifies the construction at the flattening exponent.

\paragraph{The brake $\lambda$.} Replace the fixed exponent by a free one.
Writing the fractional power $\Lambda^{-\lambda/4} = \operatorname{diag}\big(
\Lambda_{ii}^{-\lambda/4}\big)$ in the paired basis, the KBBQ transform is
\begin{equation}
    \boxed{\;
    T(\lambda) \;=\; H\, \Lambda^{-\lambda/4}\,
    U^\top\, W'^\top ,
    \qquad \lambda \in [0,1] .\;}
    \label{eq:kbbq}
\end{equation}
At $\lambda = 1$ the exponent is $S^{-1/2}$ in the singular values, so
Eq.~\ref{eq:kbbq} reproduces Eq.~\ref{eq:wushmain} term for term: full
flattening, attaining $\kappa^{*}$. Lowering $\lambda$ flattens the paired
spectrum only partially, leaving the layer at a utilization
$\kappa(\lambda)/\kappa^{*} < 1$ that decreases monotonically as $\lambda \to
0$. At $\lambda = 0$ no flattening remains, though the transform still depends
on the data through $U$ and $W'^\top$ and is not orthogonal.

The brake acts at fit time only by changing the numerical entries of the
block-diagonal transform. Shape, storage, kernel and
inference arithmetic are those of \citet{chen2025wush}, with the transform
block matched to the quantization group at 16 for NVFP4 and 32 for MXFP4, so
$\lambda$ carries no deployment-time cost as compared to \citet{chen2025wush}.

%% file: sections/05_Experiments.tex
\section{Experiments}
\label{sec:experiments}

\paragraph{Setting.} We evaluate KBBQ at W4A4 on four base models---Llama-3.2-3B-Base and Llama-3-8B-Base \citep{grattafiori2024llama3}, and Qwen3-4B-Base and Qwen3-8B-Base \citep{yang2025qwen3}---under two FP4 formats. NVFP4 \citep{nvidia2025nvfp4} is executed natively on GB200 hardware using TransformerEngine block scaling, whereas MXFP4 \citep{rouhani2023microscaling} is evaluated with emulated (``fake'') quantization following \citet{chen2025wush}: operands are rounded onto the MXFP4 grid and the quantized values are represented in BF16. We compare the following methods: the identity transform (I), a dense data-free Haar rotation (HAAR), the WUSH construction with $\lambda = 1$ (RTN-WUSH), and KBBQ (RTN-KBBQ). The latter two are additionally combined with GPTQ error compensation \citep{frantar2023gptq}, yielding GPTQ-WUSH and GPTQ-KBBQ, and GPTQ is also evaluated on its own without a transform (GPTQ). We further include MR-GPTQ \citep{egiazarian2026bridginggappromiseperformance} as a baseline. Evaluation spans five benchmarks: MMLU \citep{hendrycks2021mmlu} (5-shot), GSM8K \citep{cobbe2021gsm8k} (8-shot with chain-of-thought prompting; \citealp{wei2022cot}), HellaSwag \citep{zellers2019hellaswag}, WinoGrande \citep{sakaguchi2020winogrande} (5-shot), and MBPP \citep{austin2021mbpp}. Results for the 8B models are deferred to Table~\ref{tab:w4a4_arms_8b}.

\begin{table}[t]
\centering
\footnotesize
\caption{Brake ablation on Qwen3-4B-Base, NVFP4, RTN setting. $\lambda$ is the fraction of the flattening exponent applied.
The five-task average peaks at $\lambda{=}0.75$.}
\label{tab:lambda_grid}
\resizebox{\columnwidth}{!}{%
\begin{tabular}{lccccc|cc}
\toprule
$\lambda$ & MMLU & GSM8K & HellaSwag & WinoGrande & MBPP & Average & Recovery \\
\midrule
BF16 & 72.89 & 80.21 & 73.15 & 61.72 & 65.39 & 70.67 & 100 \\
\midrule
0.00 & 41.94 & 20.85 & 46.81 & 51.46 & 41.05 & 40.42 & 57.2 \\
0.25 & 68.98 & 69.22 & 70.22 & 59.35 & 58.35 & 65.22 & 92.3 \\
0.50 & 69.68 & 71.72 & 70.06 & 59.04 & 57.55 & 65.61 & 92.8 \\
0.75 & 70.10 & 74.83 & 70.13 & 58.80 & 58.35 & 66.44 & 94.0 \\
1.00 & 69.36 & 69.07 & 70.38 & 57.38 & 56.34 & 64.51 & 91.3 \\
\bottomrule
\end{tabular}%
}
\end{table}

\paragraph{Results.} Table~\ref{tab:w4a4_arms_all} reports per-benchmark accuracy, the five-task average, and the fraction of BF16 performance recovered for the two smaller models. Table~\ref{tab:w4a4_arms_8b} reports the corresponding results for the 8B models. RTN-KBBQ improves over the identity baseline on all eight model--format pairs, with gains ranging from 0.5 points (Llama-3-8B-Base, NVFP4) to 7.1 points (Qwen3-8B-Base, MXFP4), and outperforms RTN-WUSH on seven of the eight pairs, e.g., 71.54 (96.4\% recovery) versus 70.73 (95.3\%) on Qwen3-8B-Base under NVFP4.

%% file: sections/06_RobustnessAblations.tex
\begin{table}[t]
\centering
\scriptsize
\caption{Calibration robustness on Qwen3-4B-Base (NVFP4, RTN setting).
\emph{Top:} calibration corpus at $n{=}1536$. \emph{Bottom:} calibration-token
budget on the default corpus.}
\label{tab:calib}
\resizebox{\columnwidth}{!}{%
\begin{tabular}{l|ccccc|cc}
\toprule
Setting & MMLU & GSM8K & HellaSwag & WinoGrande & MBPP & Avg. & Rec. \\
\midrule
BF16 & 72.89 & 80.21 & 73.15 & 61.72 & 65.39 & 70.67 & 100 \\
\midrule
\multicolumn{8}{c}{\emph{Calibration domain} ($n{=}1536$)} \\
\midrule
default / WUSH
    & 69.36 & 69.07 & 70.38 & 57.38 & 56.34 & \underline{64.51} & \underline{91.3} \\
default / KBBQ
    & 70.10 & 74.83 & 70.13 & 58.80 & 58.35 & \textbf{66.44} & \textbf{94.0} \\
\midrule
GSM8K / WUSH
    & 69.32 & 70.20 & 69.74 & 58.56 & 54.53 & \underline{64.47} & \underline{91.2} \\
GSM8K / KBBQ
    & 69.47 & 71.95 & 70.57 & 58.56 & 55.13 & \textbf{65.14} & \textbf{92.2} \\
\midrule
random / WUSH
    & 69.41 & 65.43 & 69.27 & 58.33 & 49.90 & \underline{62.47} & \underline{88.4} \\
random / KBBQ
    & 69.33 & 69.45 & 69.81 & 60.30 & 56.54 & \textbf{65.09} & \textbf{92.1} \\
\midrule
\multicolumn{8}{c}{\emph{Calibration budget} (default corpus)} \\
\midrule
$256$ / WUSH
    & 69.01 & 67.40 & 69.98 & 57.46 & 56.74 & \underline{64.12} & \underline{90.7} \\
$256$ / KBBQ
    & 69.29 & 73.62 & 70.12 & 58.80 & 56.14 & \textbf{65.59} & \textbf{92.8} \\
\midrule
$1024$ / WUSH
    & 69.31 & 67.93 & 70.11 & 58.48 & 54.73 & \underline{64.11} & \underline{90.7} \\
$1024$ / KBBQ
    & 69.78 & 71.87 & 70.70 & 58.17 & 56.54 & \textbf{65.41} & \textbf{92.6} \\
\midrule
$4096$ / WUSH
    & 69.32 & 70.81 & 70.20 & 59.43 & 53.52 & \underline{64.66} & \underline{91.5} \\
$4096$ / KBBQ
    & 69.48 & 71.87 & 70.54 & 58.33 & 56.34 & \textbf{65.31} & \textbf{92.4} \\
\midrule
$65536$ / WUSH
    & 69.42 & 68.92 & 70.47 & 57.77 & 53.92 & \underline{64.10} & \underline{90.7} \\
$65536$ / KBBQ
    & 69.48 & 72.40 & 70.28 & 57.46 & 58.35 & \textbf{65.59} & \textbf{92.8} \\
\bottomrule
\end{tabular}%
}
\end{table}

\section{Robustness and Ablations}
\label{sec:robustness}

We measure how much of KBBQ's operating point has to be tuned, varying the
brake $\lambda$, the calibration domain, and the calibration budget $n$. All
results use the RTN setting (no GPTQ stage), NVFP4,
and the harness of \S\ref{sec:experiments} unless stated otherwise.

\subsection{Ablation on $\lambda$}
\label{sec:abl_lambda}

Table~\ref{tab:lambda_grid} sweeps the brake across its range, from
$\lambda = 0$ (no flattening) to $\lambda = 1$ (the WUSH construction).

The optimum is interior. The best five-task average is at $\lambda = 0.75$
(66.44, 94.0\% recovery), 1.93 points above the unbraked $\lambda = 1$ (64.51,
91.3\%), so the maximizer of the idealized objective is not the best operating
point. Figure~\ref{fig:brake} plots the two against each other: the objective
$\kappa(\lambda)/\kappa^{*}$ climbs monotonically from $0.73$ at $\lambda = 0$
to $1.00$ at $\lambda = 1$, while accuracy peaks earlier and falls. Every
interior setting exceeds $\lambda = 1$ (65.22, 65.61 and 66.44 at $\lambda =
0.25$, $0.50$ and $0.75$), so the result does not depend on the choice of
grid point.

\subsection{Calibration domain}
\label{sec:abl_domain}

The upper half of Table~\ref{tab:calib} replaces the calibration corpus at
fixed $n{=}1536$. The default corpus is the in-house mixed-domain split used
throughout; the GSM8K corpus draws tokens from the GSM8K training split,
matching an evaluated task in domain; the random corpus is tokens drawn
uniformly from the tokenizer's vocabulary. The effect is modest: likelihood benchmarks move by at most 1.8 points either way, and the
brake's GSM8K margin over WUSH survives on all three corpora, at 5.8, 1.8 and
4.0 points.

\subsection{Calibration budget}
\label{sec:abl_amount}
The lower half of Table~\ref{tab:calib} varies the calibration budget from
$n{=}256$ to $n{=}65536$ tokens, with all samples drawn from the default
corpus, so that only the amount of calibration data changes. The five-task
average is essentially insensitive to this 256-fold variation: the four WUSH
rows lie within 0.6 points of one another, and the four KBBQ rows within 0.3
(1.1 when also counting the $n{=}1536$ setting used in the main experiments).
The generation benchmarks are noisier---GSM8K and MBPP each vary by 1.8 to
3.4 points across the four budgets---but the ranges overlap, and KBBQ retains
its advantage over WUSH at every budget. Notably, the GSM8K margin remains
6.2 points even with only 256 calibration tokens, and increasing the budget
to 65{,}536 tokens does not drive the method back toward the untransformed
operating point. This suggests that the operating point is determined by the
geometry of the layer rather than by the calibration corpus, and that
over-calibration is not a concern.

%% file: sections/09_Conclusion.tex
\section{Conclusion}
\label{sec:conclusion}
We analyzed the expected noise energy of a quantized dot product and showed
that the number format enters this quantity through a single object: the
elementwise variance profile. A constant profile recovers classical integer
quantization theory, in which noise scales with amplitude, while a
multiplicative profile reduces the data dependence to the participation
factor $\kappa$. This perspective explains why transforms behave differently
across formats: diagonal rescaling leaves $\kappa$ unchanged, and a generic
rotation merely drives it toward a fixed attractor that trained networks
have already passed. The same analysis yields a closed-form per-layer
ceiling $\kappa^{*}$, whose maximizer under a matched weight ensemble
coincides with a construction from recent work---one that is optimal for an
objective a deployed quantizer does not actually face. KBBQ instead treats
the distance traveled toward this ceiling as a tunable parameter. 

%% file: sections/A1_DetailedProofs.tex
\section{Per-Family Fits of the Law}
\label{app:lawfits}

\begin{table}[!t]
\centering
\caption{Per-family fits of Eq.~\ref{eq:kappalaw}, original basis, one column
block per format. The \emph{records} column counts the fitted samples, eight
sampled rows per GEMM cell; the intercept and RMS columns are the level and
scatter of $r$ (Eq.~\ref{eq:residual}).}
\label{tab:lawfits}
\small
\setlength{\tabcolsep}{5pt}
\begin{tabular}{@{}lr|ccc|ccc@{}}
\toprule
& & \multicolumn{3}{c|}{E4M3 (per-tensor)} & \multicolumn{3}{c}{E2M1 (block-16)} \\
Family & records & slope & intercept & RMS & slope & intercept & RMS \\
\midrule
GPT-2 124M   &  384 & 10.02 & 28.59 & 0.84 & 10.07 & 14.79 & 1.25 \\
Llama-3.2-1B &  512 &  9.76 & 28.32 & 1.29 &  9.69 & 15.09 & 1.24 \\
Llama-3.2-3B &  896 &  9.65 & 28.51 & 0.78 &  9.88 & 15.21 & 1.04 \\
Qwen3-4B     &  576 &  9.55 & 28.38 & 0.94 & 10.27 & 15.17 & 1.30 \\
Qwen3-8B     &  576 &  9.61 & 28.37 & 1.08 & 10.55 & 14.77 & 1.67 \\
Qwen3-32B    & 1024 &  9.17 & 28.47 & 0.99 &  9.79 & 15.23 & 1.56 \\
\bottomrule
\end{tabular}
\end{table}

Table~\ref{tab:lawfits} reports the per-family fits of
Eq.~\ref{eq:kappalaw} plotted in Figure~\ref{fig:law}: the regression of
measured SNR on $10\log_{10}\kappa$ per cell, in the original basis, one
column block per format. 

\section{Results on Other Models}

\begin{table*}[!t]
\centering
\caption{W4A4 accuracy results using our evaluation harness for
Llama-3-8B-Base and Qwen3-8B-Base under different quantization
techniques. Best technique results for each quantization format are
\textbf{bolded}, second-best \underline{underlined}.}
\label{tab:w4a4_arms_8b}
\resizebox{\textwidth}{!}{%
\begin{tabular}{c|c|l|ccccc|cc}
\toprule
Model & Format & Method & MMLU & GSM8K & HellaSwag & WinoGrande & MBPP & Average & Recovery \\
\midrule
\multirow{17}{*}{\rotatebox{90}{Llama-3-8B-Base}}
 & BF16 & - & 65.27 & 47.76 & 79.75 & 64.72 & 47.08 & 60.92 & 100 \\
\cmidrule{2-10}
 & \multirow{8}{*}{NVFP4}
 & I & 60.92 & 35.78 & 77.18 & 61.17 & 39.24 & 54.86 & 90.1 \\
 &  & HAAR & 58.94 & 28.51 & 75.75 & 59.27 & 36.82 & 51.86 & 85.1 \\
 &  & RTN-WUSH & 61.94 & 37.38 & 77.03 & 62.04 & 37.02 & 55.08 & 90.4 \\
 &  & RTN-KBBQ & 61.89 & 37.68 & 76.88 & 60.54 & 39.84 & 55.37 & 90.9 \\
 &  & GPTQ & 61.45 & 36.69 & 77.18 & 62.35 & 40.85 & 55.70 & 91.4 \\
 &  & MR-GPTQ & 60.16 & 37.68 & 76.65 & 60.85 & 39.64 & 55.00 & 90.3 \\
 &  & GPTQ-WUSH & 62.75 & 41.47 & 77.74 & 63.46 & 39.84 & \underline{57.05} & \underline{93.6} \\
 &  & GPTQ-KBBQ & 62.60 & 39.88 & 78.06 & 63.06 & 41.85 & \textbf{57.09} & \textbf{93.7} \\
\cmidrule{2-10}
 & \multirow{8}{*}{MXFP4}
 & I & 54.03 & 25.55 & 73.46 & 61.09 & 27.16 & 48.26 & 79.2 \\
 &  & HAAR & 51.85 & 21.83 & 72.38 & 60.54 & 27.77 & 46.87 & 76.9 \\
 &  & RTN-WUSH & 59.11 & 32.45 & 76.45 & 61.80 & 38.23 & 53.61 & 88.0 \\
 &  & RTN-KBBQ & 59.31 & 32.37 & 76.38 & 60.85 & 37.83 & 53.35 & 87.6 \\
 &  & GPTQ & 54.49 & 25.70 & 74.15 & 59.67 & 32.60 & 49.32 & 81.0 \\
 &  & MR-GPTQ & 58.71 & 28.75 & 76.45 & 61.64 & 31.19 & 51.35 & 84.3 \\
 &  & GPTQ-WUSH & 60.14 & 36.01 & 76.88 & 61.56 & 34.81 & \underline{53.88} & \underline{88.4} \\
 &  & GPTQ-KBBQ & 60.48 & 34.72 & 76.88 & 62.43 & 37.22 & \textbf{54.35} & \textbf{89.2} \\
\midrule
\multirow{17}{*}{\rotatebox{90}{Qwen3-8B-Base}}
 & BF16 & - & 76.57 & 82.87 & 77.19 & 64.56 & 69.82 & 74.20 & 100 \\
\cmidrule{2-10}
 & \multirow{8}{*}{NVFP4}
 & I & 72.72 & 74.00 & 74.65 & 60.77 & 59.56 & 68.34 & 92.1 \\
 &  & HAAR & 71.31 & 63.91 & 71.09 & 59.59 & 46.28 & 62.44 & 84.2 \\
 &  & RTN-WUSH & 74.44 & 78.92 & 75.45 & 61.48 & 63.38 & 70.73 & 95.3 \\
 &  & RTN-KBBQ & 74.84 & 78.77 & 75.07 & 62.43 & 66.60 & 71.54 & 96.4 \\
 &  & GPTQ & 74.45 & 80.82 & 75.18 & 63.61 & 64.99 & \underline{71.81} & \underline{96.8} \\
 &  & MR-GPTQ & 74.42 & 76.12 & 75.16 & 63.69 & 63.98 & 70.67 & 95.2 \\
 &  & GPTQ-WUSH & 75.07 & 80.44 & 75.60 & 64.33 & 64.79 & \textbf{72.04} & \textbf{97.1} \\
 &  & GPTQ-KBBQ & 74.64 & 81.65 & 75.56 & 61.72 & 64.39 & 71.59 & 96.5 \\
\cmidrule{2-10}
 & \multirow{8}{*}{MXFP4}
 & I & 68.61 & 67.55 & 71.48 & 56.75 & 45.67 & 62.01 & 83.6 \\
 &  & HAAR & 66.72 & 47.61 & 69.53 & 58.48 & 52.31 & 58.93 & 79.4 \\
 &  & RTN-WUSH & 72.93 & 74.68 & 73.71 & 59.83 & 61.97 & 68.62 & 92.5 \\
 &  & RTN-KBBQ & 73.66 & 72.78 & 74.66 & 61.33 & 63.18 & 69.12 & 93.2 \\
 &  & GPTQ & 70.25 & 67.40 & 72.18 & 60.46 & 54.93 & 65.04 & 87.7 \\
 &  & MR-GPTQ & 72.48 & 73.69 & 74.23 & 60.69 & 55.94 & 67.41 & 90.8 \\
 &  & GPTQ-WUSH & 74.13 & 76.42 & 75.21 & 62.67 & 59.96 & \underline{69.68} & \underline{93.9} \\
 &  & GPTQ-KBBQ & 73.64 & 79.53 & 75.23 & 60.30 & 62.58 & \textbf{70.26} & \textbf{94.7} \\
\bottomrule
\end{tabular}%
}\end{table*}

Table~\ref{tab:w4a4_arms_8b} repeats the evaluation of
\S\ref{sec:experiments} on Llama-3-8B-Base and Qwen3-8B-Base.

\section{Full Proofs for Section 2}
\label{app:proofs}

Throughout, $S = \sum_{k=1}^{D} w_k x_k$ is one dot product of the GEMM,
$p_k = w_k x_k$, quantization replaces each operand by $\hat w_k = w_k +
\delta w_k$ and $\hat x_k = x_k + \delta x_k$, and A1--A3 are the
assumptions of \S\ref{sec:assumptions}. Expectations are over the data
distribution and over any randomness in the quantizer; weights are fixed
unless an ensemble is stated.The weight-side error is a
function of the weights and of the weight quantizer's randomization, the
activation-side error is a function of the activation draw $x$ and of the
activation quantizer's randomization, and the two randomizations are
independent of each other and of $x$, so that
\begin{equation}
    \delta w \;\perp\; (x,\ \delta x) .
    \label{eq:convention}
\end{equation}
In particular $\delta w_k$ and $\delta x_k$ are conditionally independent
given the operands. A2 as stated asserts only that the two errors are
uncorrelated; proofs that factor an expectation across the two operands
use Eq.~\ref{eq:convention}. The activation-side error may depend on $x$
in any way consistent with A1--A3.

\subsection{The Noise Functional}
\label{app:derivation}

\begin{proposition}[Second moment]
\label{prop:exactmoment}
Write $e_k = x_k\,\delta w_k + w_k\,\delta x_k$ and
$q_k = \delta w_k\,\delta x_k$. Then $\delta S = \sum_k e_k + \sum_k q_k$,
and without any assumptions
\begin{equation}
    \E[\delta S^2]
    \;=\; \sum_k \E[e_k^2]
    \;+\; \underbrace{\sum_{j \ne k} \E[e_j e_k]}_{R_1}
    \;+\; \underbrace{2\sum_{j,k} \E[e_j q_k]}_{R_2}
    \;+\; \underbrace{\E\Big[\big(\textstyle\sum_k q_k\big)^2\Big]}_{R_3},
    \label{eq:exactmoment}
\end{equation}
with
\begin{equation}
    \E[e_k^2] \;=\; \E[x_k^2\,\delta w_k^2] + \E[w_k^2\,\delta x_k^2]
    \;+\; \underbrace{2\,\E[w_k x_k\,\delta w_k\,\delta x_k]}_{R_0^{(k)}}.
\end{equation}
\end{proposition}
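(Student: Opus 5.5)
The plan is pure algebra: expand the exact product error, square it, and group terms. No assumptions A1--A3 or the convention of Eq.~\ref{eq:convention} are used anywhere. The only requirement is that the relevant second moments are finite, so that linearity of expectation applies to every term.

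\textbf{Step 1 (exact decomposition).} Start from $\hat S = \sum_k (w_k+\delta w_k)(x_k+\delta x_k)$. Expanding each product gives $w_kx_k + x_k\delta w_k + w_k\delta x_k + \delta w_k\delta x_k$. Subtracting $S$ leaves $\delta S = \sum_k e_k + \sum_k q_k$ exactly, with no $O(\delta^2)$ remainder. This identity is what separates the statement from the first-order expansion in Eq.~\ref{eq:firstorder}: the second-order term is now kept explicitly as $\sum_k q_k$.

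\textbf{Step 2 (squaring).} Write $E=\sum_k e_k$ and $Q=\sum_k q_k$, so that $\delta S^2 = E^2 + 2EQ + Q^2$. Expanding $E^2 = \sum_k e_k^2 + \sum_{j\ne k} e_je_k$ produces the diagonal sum and $R_1$. The identity $EQ=\sum_{j,k} e_jq_k$ gives $R_2$, and $Q^2$ is kept whole as $R_3$. Taking expectations termwise then yields Eq.~\ref{eq:exactmoment}.

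\textbf{Step 3 (diagonal term).} Expand $e_k^2 = x_k^2\delta w_k^2 + w_k^2\delta x_k^2 + 2w_kx_k\delta w_k\delta x_k$ and take expectations. This gives the stated form with $R_0^{(k)}$.

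There is essentially no obstacle. The only point worth flagging is integrability: every expectation in the display must exist. I would state this once, for instance by assuming finite fourth moments of the operands and errors, so that products such as $\E[e_jq_k]$ are well defined by Cauchy--Schwarz. The role of the proposition is bookkeeping. Later results will show that A1--A3 kill $R_0$ and $R_1$, and will bound $R_2$ and $R_3$ as higher order. That recovers Eq.~\ref{eq:functional}, but none of it belongs in this proof.
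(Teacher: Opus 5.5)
Your proposal is correct and follows the paper's proof step for step. Like the paper, you expand $\hat w_k \hat x_k$ to get $\delta S = E + Q$ exactly, square, split $\E[E^2]$ into its diagonal and $R_1$, and expand $e_k^2$ to isolate $R_0^{(k)}$. Your square-integrability caveat on $e_k, q_k$ is a sensible precision that the paper's ``without any assumptions'' leaves implicit.
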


\begin{proof}
Expanding $\hat w_k \hat x_k = (w_k + \delta w_k)(x_k + \delta x_k)$ and
summing over $k$ gives $\hat S = S + \sum_k e_k + \sum_k q_k$, hence the
decomposition of $\delta S$. Squaring $\delta S = E + Q$ with
$E = \sum_k e_k$, $Q = \sum_k q_k$ and taking expectations,
\begin{equation}
    \E[\delta S^2] = \E[E^2] + 2\,\E[EQ] + \E[Q^2],
\end{equation}
and splitting $\E[E^2] = \sum_k \E[e_k^2] + \sum_{j\ne k}\E[e_je_k]$
gives Eq.~\ref{eq:exactmoment}. Squaring $e_k$ gives the diagonal split.
\end{proof}

Each correction term is eliminated by a specific assumption, so each
failure mode of the law traces to one of A1--A3:
\begin{itemize}
    \item $R_1$ contains only cross-coordinate products and vanishes
    under A1--A2 (Proposition~\ref{prop:functional}, Step~1). When A1
    fails, as under deterministic round-to-nearest on structured
    operands, errors acquire a shared sign and $R_1$ is the
    coherent-bias channel.
    \item $R_0^{(k)}$ and $R_2$ contain a $\delta w\,\delta x$ pair at
    matched or mixed coordinates; both vanish under A2's independence
    of the two quantizers.
    \item $R_3$ is the pure second-order term. Its cross-coordinate part
    vanishes with $R_1$; its diagonal part survives A1--A3 and is
    controlled by the scale of the errors. Under the multiplicative
    profile Eq.~\ref{eq:fpprofile}, conditioning on the operands and
    using Eq.~\ref{eq:convention} with A3,
    \begin{equation}
        \sum_k \E\big[\delta w_k^2\,\delta x_k^2\big]
        = \sum_k \E\big[\varphi_w(w_k)\,\varphi_x(x_k)\big]
        = \sigma_w^2\sigma_x^2 \sum_k \E[p_k^2],
    \end{equation}
    against a retained diagonal of $(\sigma_w^2 + \sigma_x^2)\sum_k
    \E[p_k^2]$ (Eq.~\ref{eq:fplaw}). The ratio
    $\sigma_w^2\sigma_x^2/(\sigma_w^2+\sigma_x^2)$ carries no data
    dependence: at the E4M3 constant of \S\ref{app:formats} it is
    $3.5\times 10^{-4}$, displacing the predicted SNR by $0.0015$~dB,
    and at the in-band FP4 constant it is $5.6\times 10^{-3}$, or
    $0.024$~dB, both below the residual RMS of
    \S\ref{sec:derivation}.
\end{itemize}

\begin{proposition}[The functional]
\label{prop:functional}
Under A1--A3,
\begin{equation}
    \E[\delta S^2] \;=\; \sum_k \Big( \E[x_k^2]\,\varphi_w(w_k)
    + \E[w_k^2]\,\E[\varphi_x(x_k)] \Big).
    \label{eq:appfunctional}
\end{equation}
\end{proposition}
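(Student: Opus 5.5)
The plan is to start from the exact decomposition of Proposition~\ref{prop:exactmoment} and strike each correction term using the assumption the bullet list assigns to it. We work to the first order of Eq.~\ref{eq:firstorder}, so $R_3$ is dropped as the $O(\delta^2)$ remainder; its diagonal part has already been bounded above as data-independent and negligible. What remains is $\sum_k \E[e_k^2] + R_1 + R_2$. Throughout, we factor expectations across the two operands using the convention Eq.~\ref{eq:convention}.

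\textbf{Step 1 ($R_1$ and the cross terms).} For $j\ne k$, we expand $\E[e_je_k]$ into four products. Consider first the product $\E[x_jx_k\,\delta w_j\delta w_k]$. Here $\delta w\perp x$ gives $\E[x_jx_k]\,\E[\delta w_j\delta w_k]$, and A2 makes this zero. The product $\E[w_jw_k\,\delta x_j\delta x_k]$ vanishes with the weights treated as fixed, conditioning on $x$, and using A2 for $\delta x$. This step needs the conditional form of A2, and we adopt it as the reading of A2 for the activation quantizer given $x$. Next take a mixed product such as $\E[x_j w_k\,\delta w_j\,\delta x_k]$. We condition on $(x,\delta x)$ and use Eq.~\ref{eq:convention} together with A1, so $\E[\delta w_j]=0$ and the product vanishes. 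The same argument kills $R_0^{(k)}=2\E[w_kx_k\delta w_k\delta x_k]$ and every term of $R_2$. Each such term contains exactly one factor $\delta w$, possibly multiplied by $\delta w$ times $\delta x^2$ or $\delta x$ times $\delta w^2$ inside $R_2$. For those we condition on $\delta w$ and use $\E[\delta x_k\mid x]=0$ (A1), or condition on $(x,\delta x)$ and use $\E[\delta w_j]=0$. Any remaining $\delta w_j\delta w_k^2$ pieces belong to higher order and are discarded with $R_3$.

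\textbf{Step 2 (diagonal).} Once the cross terms are gone, $\E[e_k^2]=\E[x_k^2\delta w_k^2]+\E[w_k^2\delta x_k^2]$. For the first term, Eq.~\ref{eq:convention} gives $\E[x_k^2]\,\E[\delta w_k^2]$, and A3 turns $\E[\delta w_k^2]$ into $\varphi_w(w_k)$. This is deterministic because the weights are fixed. For the second term, we pull out $w_k^2$ and apply the tower rule on $x_k$, so A3 gives $\E[\varphi_x(x_k)]$. Summing over $k$ yields Eq.~\ref{eq:appfunctional}.

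\textbf{Main obstacle.} The delicate point is the status of the second-order pieces, namely $R_3$ and the components of $R_2$ that are cubic in the errors. The proposition is an equality only at first order in $\delta$. We will state this explicitly, consistent with Eq.~\ref{eq:firstorder}. We will also make sure the cross-operand factorization relies on Eq.~\ref{eq:convention} rather than on the bare uncorrelatedness in A2, since uncorrelatedness alone does not let $x_j$ be separated from $\delta w_j\delta x_k$.
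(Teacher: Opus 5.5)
Your outline matches the paper's proof. You start from Proposition~\ref{prop:exactmoment}, kill the cross terms by factoring across operands with Eq.~\ref{eq:convention} plus A1/A2, and factor the diagonal with A3 and the tower rule. The formula you reach is right at the first-order level you state, but the $R_2$ step relies on a stronger assumption than the paper states.

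For $j\neq k$, the summand $x_j\,\delta w_j\,\delta w_k\,\delta x_k$ of $\E[e_j q_k]$ carries two weight errors. Your route (condition on $\delta w$) reduces it to $\E[\delta w_j\,\delta w_k]\,\E[x_j\,\delta x_k]$ and then needs $\E[x_j\,\delta x_k]=0$. That follows from your strengthened $\E[\delta x_k\mid x]=0$, but not from A1 as stated, $\E[\delta x_k\mid x_k]=0$. The paper explicitly lets the activation error depend on all of $x$, e.g.\ through a shared block scale. The paper instead kills the other factor: $\E[\delta w_j\,\delta w_k]=0$ by A2, so the term vanishes exactly. There are also no $\delta w_j\,\delta w_k^2$ pieces left to discard.

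Two smaller points go the same way. The fourth term of $R_1$ needs no conditional A2: the $w_j w_k$ are constants, so unconditional $\E[\delta x_j\,\delta x_k]=0$ suffices. The cross part of $R_3$ is exactly $\sum_{j\neq k}\E[\delta w_j\,\delta w_k]\,\E[\delta x_j\,\delta x_k]=0$ by A2, so it does not need to be dropped by order counting.

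Arguing this way leaves the fourth-order diagonal of $R_3$ as the \emph{only} truncation. The data-independent estimate in the bullet list after Proposition~\ref{prop:exactmoment} then accounts for the whole approximation error. Your version reaches the same formula, but under stronger readings of A1 and A2 and with a looser error account.
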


\begin{proof}
In Eq.~\ref{eq:exactmoment} we show $R_1 = R_2 = R_0^{(k)} = 0$, drop
the diagonal of $R_3$, and factor the surviving terms.

\emph{Step 1 ($R_1 = 0$).} For $j \ne k$,
\begin{equation}
    \E[e_j e_k]
    = \E[x_j x_k\,\delta w_j\,\delta w_k]
    + \E[x_j w_k\,\delta w_j\,\delta x_k]
    + \E[w_j x_k\,\delta x_j\,\delta w_k]
    + \E[w_j w_k\,\delta x_j\,\delta x_k].
\end{equation}
By Eq.~\ref{eq:convention} the first term factors as
$\E[x_j x_k]\,\E[\delta w_j\,\delta w_k]$, zero by A2; the fourth is
$w_j w_k\,\E[\delta x_j\,\delta x_k]$, zero by A2. The second factors as
$w_k\,\E[\delta w_j]\,\E[x_j\,\delta x_k]$ with
$\E[\delta w_j] = \E[\E[\delta w_j \mid w_j]] = 0$ by A1; the third is
symmetric.

\emph{Step 2 ($R_0^{(k)} = R_2 = 0$).} By Eq.~\ref{eq:convention} and
A1, $R_0^{(k)} = 2\,w_k\,\E[\delta w_k]\,\E[x_k\,\delta x_k] = 0$. For
$R_2$, substitute $e_j$ and $q_k$:
\begin{equation}
    \E[e_j q_k]
    = \E\big[x_j\,\delta w_j\,\delta w_k\,\delta x_k\big]
    + w_j\,\E\big[\delta x_j\,\delta w_k\,\delta x_k\big] .
\end{equation}
For $j = k$ the first term is, conditioning on the operands and using
Eq.~\ref{eq:convention},
$\E\big[x_k\,\E[\delta w_k^2 \mid w_k,x_k]\,
\E[\delta x_k \mid w_k,x_k]\big] = 0$ by A1, and the second is
$w_k\,\E[\delta w_k]\,\E[\delta x_k^2] = 0$ by A1. For $j \ne k$ the
first factors as $\E[x_j\,\delta x_k]\,\E[\delta w_j\,\delta w_k] = 0$
by A2 and the second as
$w_j\,\E[\delta w_k]\,\E[\delta x_j\,\delta x_k] = 0$ by A1.

\emph{Step 3 ($R_3$).} Expanding,
\begin{equation}
    R_3 = \sum_k \E[\delta w_k^2\,\delta x_k^2]
          + \sum_{j\ne k}
          \E[\delta w_j\,\delta x_j\,\delta w_k\,\delta x_k],
\end{equation}
and each cross summand factors by Eq.~\ref{eq:convention} as
$\E[\delta w_j\,\delta w_k]\,\E[\delta x_j\,\delta x_k] = 0$ by A2. The diagonal part is fourth order in the errors and is
dropped; this truncation is the one approximation in the derivation and
is not a consequence of A1--A3.

\emph{Step 4 (factorization under A3).} Steps~1--3 leave
$\E[\delta S^2] = \sum_k \big(\E[x_k^2\,\delta w_k^2]
+ \E[w_k^2\,\delta x_k^2]\big)$. By the tower property and A3,
\begin{equation}
    \E[x_k^2\,\delta w_k^2]
    = \E\big[x_k^2\,\E[\delta w_k^2 \mid w_k, x_k]\big]
    = \varphi_w(w_k)\,\E[x_k^2],
    \qquad
    \E[w_k^2\,\delta x_k^2]
    = w_k^2\,\E\big[\varphi_x(x_k)\big]
    = \E[w_k^2]\,\E[\varphi_x(x_k)],
\end{equation}
where $\varphi_w(w_k)$ leaves the expectation because $w_k$ is fixed,
and $w_k^2 = \E[w_k^2]$ for fixed weights, written so to keep the
notation symmetric for the ensemble statements of \S\ref{app:ceiling}.
The two terms differ in the position of the outer expectation. The
activation profile is evaluated at a random argument and averaged, the
weight profile at the fixed $w_k$.
\end{proof}

\subsection{Properties of the Participation Factor}
\label{app:kappa}

Write $m_k = \E[p_k^2]$ and $r_k = \sqrt{m_k}$ for the energy and RMS of
the $k$-th product, so $\kappa = \E[S^2]/\sum_k m_k$.

\begin{proposition}[Range]
\label{prop:range}
$\kappa \in [0, D]$; $\kappa \to 0$ under cancellation, and $\kappa = D$
iff the products are fully coherent: equal RMS and pairwise correlation
one.
\end{proposition}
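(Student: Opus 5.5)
The plan is to expand the numerator and control it by the denominator through Cauchy--Schwarz, applied twice. First, write $\E[S^2] = \sum_{j,k} \E[p_j p_k]$. By Cauchy--Schwarz in $L^2$ of the data distribution, $|\E[p_j p_k]| \le \sqrt{m_j m_k} = r_j r_k$. Hence
\[
\E[S^2] \;\le\; \sum_{j,k} r_j r_k \;=\; \Big(\sum_k r_k\Big)^2 .
\]
A second Cauchy--Schwarz, now in $\mathbb{R}^D$ against the all-ones vector, gives
\[
\Big(\sum_k r_k\Big)^2 \;\le\; D \sum_k r_k^2 \;=\; D \sum_k m_k .
\]
Dividing by $\sum_k m_k$, which we assume positive so that $\kappa$ is defined, yields $\kappa \le D$. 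The lower bound $\kappa \ge 0$ is immediate, since $\E[S^2] \ge 0$ and the denominator is positive.

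For the equality case, $\kappa = D$ forces both inequalities to be tight.
\begin{itemize}
\item Tightness of the second inequality is the equality case of Cauchy--Schwarz in $\mathbb{R}^D$: $(r_k)$ is parallel to $\mathbf{1}$, so all products have equal RMS.
\item Tightness of the first inequality requires $\E[p_j p_k] = r_j r_k$ for every pair. In $L^2$ this means $p_j$ and $p_k$ are nonnegative multiples of one another, so their (uncentered) correlation coefficient $\E[p_jp_k]/(r_jr_k)$ equals one.
\end{itemize}
Conversely, if all $r_k$ are equal and all pairwise correlations are one, then $\E[S^2] = D^2 r^2$ while $\sum_k m_k = D r^2$, so $\kappa = D$. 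We will state that ``correlation'' here means the uncentered second-moment correlation, because that is the quantity entering $\E[S^2]$.

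For the endpoint $\kappa \to 0$, we do not need a sharp characterization. It suffices to exhibit cancellation: $\kappa = 0$ exactly when $S = 0$ almost surely while some $m_k > 0$. A simple example is $D = 2$ with $p_2 = -p_1$. A family approaching such cancellation, say $p_2 = -p_1 + \epsilon\,\xi$, gives $\kappa \to 0$ continuously.

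The main obstacle is the equality analysis. I will need to handle terms with $r_j = 0$ carefully, since there the correlation is undefined. This is not a real problem, though: equal RMS with $\sum_k m_k > 0$ rules out any zero $r_k$. Once that is checked, the rest of the argument is routine.
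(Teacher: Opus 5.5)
Your proposal is correct and follows the paper's proof essentially step for step: the same two Cauchy--Schwarz inequalities (termwise in $L^2$, then against the all-ones vector), the same equality analysis, and the same $D=2$, $p_2 = -p_1 + \epsilon\xi$ cancellation example. Where the paper phrases the equality case as $p_j = c_{jk}p_k$ with $c_{jk}>0$ and concludes that all $p_k$ coincide almost surely, you add the check that equal RMS with a positive denominator rules out $r_k = 0$, so the correlations are well defined.
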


\begin{proof}
Numerator and denominator are expectations of squares, and the
denominator is positive whenever $\kappa$ is defined, so $\kappa \ge 0$.
By Cauchy--Schwarz, $\E[p_j p_k] \le r_j r_k$ for each pair, and by
Cauchy--Schwarz against the all-ones vector,
$\big(\sum_k r_k\big)^2 \le D\sum_k r_k^2$; hence
\begin{equation}
    \E[S^2] = \sum_{j,k}\E[p_j p_k]
    \;\le\; \Big(\sum_k r_k\Big)^{2}
    \;\le\; D \sum_k m_k ,
\end{equation}
and dividing by $\sum_k m_k$ gives $\kappa \le D$. The first inequality
is tight iff $p_j = c_{jk}\,p_k$ almost surely with $c_{jk} > 0$ for
every pair, the second iff all $r_k$ are equal; together these force
all $p_k$ to coincide almost surely. Conversely $p_k \equiv p$ gives
$\E[S^2] = D^2\,\E[p^2]$ and $\sum_k m_k = D\,\E[p^2]$, so $\kappa = D$.
For the lower endpoint take $D = 2$ and $p_2 = -p_1 + \varepsilon Z$
with unit-variance $Z$ independent of $p_1$: the numerator is
$\varepsilon^2$ while the denominator stays bounded away from zero, so
$\kappa \to 0$ as $\varepsilon \to 0$, with $\kappa = 0$ at
$\varepsilon = 0$.
\end{proof}

\begin{proposition}[Factorization]
\label{prop:factorization}
Define the coherence and the participation ratio of the RMS profile,
\begin{equation}
\mathrm{coh} ;=; \frac{\E[S^2]}{\big(\sum_k r_k\big)^{2}}
;\in; [0,1],
\qquad
\mathrm{pr} ;=; \frac{\big(\sum_k r_k\big)^{2}}{D\sum_k r_k^{2}}
;\in; \big[\tfrac{1}{D},,1\big].
\end{equation}
Then $\kappa = \mathrm{coh}\cdot D\cdot \mathrm{pr}$.
\end{proposition}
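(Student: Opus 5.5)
The factorization itself is an algebraic identity, so I will verify it first and then spend most of the effort on the two ranges. Write $R = \sum_k r_k$ with $r_k = \sqrt{m_k}$ as in \S\ref{app:kappa}. Whenever $\kappa$ is defined we have $\sum_k m_k > 0$, hence some $r_k > 0$ and $R > 0$, so both ratios are well defined. Then
\begin{equation}
    \mathrm{coh}\cdot D\cdot\mathrm{pr}
    = \frac{\E[S^2]}{R^2}\cdot D\cdot\frac{R^2}{D\sum_k r_k^2}
    = \frac{\E[S^2]}{\sum_k m_k} = \kappa,
\end{equation}
using only $r_k^2 = m_k$. No assumption on the products enters beyond finite second moments, so this step is immediate.

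For the range of $\mathrm{coh}$, nonnegativity is clear because both numerator and denominator are expectations of squares. The upper bound $\mathrm{coh}\le 1$ is the first inequality in the proof of Proposition~\ref{prop:range}: expand $\E[S^2] = \sum_{j,k}\E[p_jp_k]$ and apply Cauchy--Schwarz termwise, $\E[p_jp_k]\le r_jr_k$, which sums to $R^2$. An equivalent route is Minkowski's inequality, $\sqrt{\E[S^2]} = \|\sum_k p_k\|_{L^2}\le \sum_k\|p_k\|_{L^2} = R$. Both endpoints are attained. The value $1$ occurs for fully aligned products. The value $0$ occurs in the cancellation example of Proposition~\ref{prop:range}, where $p_2 = -p_1$; there $\E[S^2]=0$ while $R>0$.

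For the range of $\mathrm{pr}$, the upper bound $\mathrm{pr}\le 1$ is Cauchy--Schwarz of the vector $(r_k)$ against the all-ones vector, $R^2\le D\sum_k r_k^2$, which is again the second inequality used in Proposition~\ref{prop:range}. Equality holds exactly when all the $r_k$ are equal. The lower bound $\mathrm{pr}\ge 1/D$ is equivalent to $R^2\ge\sum_k r_k^2$. This follows from $r_k\ge 0$, since expanding $(\sum_k r_k)^2$ gives $\sum_k r_k^2$ plus the nonnegative cross terms $\sum_{j\ne k}r_jr_k$. Equality holds exactly when at most one $r_k$ is nonzero, that is, when a single coordinate carries all the energy.

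The only point needing care is the well-definedness convention when the products are degenerate. I would state that the factorization is asserted on the same domain as Definition~\ref{def:kappa}, namely $\sum_k m_k>0$, on which $R>0$. On this domain the product of the two ranges reproduces $\kappa\in[0,D]$, consistent with Proposition~\ref{prop:range}.
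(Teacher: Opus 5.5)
Your proof is correct and matches the paper's argument: the identity is direct substitution using $r_k^2 = m_k$, and the ranges come from the two Cauchy--Schwarz steps of Proposition~\ref{prop:range}, together with $(\sum_k r_k)^2 \ge \sum_k r_k^2$ and $\E[S^2] \ge 0$. Your extra remarks are also correct and are left implicit in the paper: that $\sum_k r_k > 0$ wherever $\kappa$ is defined, the Minkowski alternative, and the endpoint cases.
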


\begin{proof}
Substituting the definitions of $\mathrm{coh}$ and $\mathrm{pr}$ gives

$$
    \mathrm{coh}\cdot D\cdot\mathrm{pr}
    =
    \frac{\E[S^2]}{\sum_k r_k^2}
    =
    \kappa,
$$

where $r_k^2 = m_k$. The bounds on $\mathrm{coh}$ and $\mathrm{pr}$ follow
from the two Cauchy--Schwarz steps in Proposition~\ref{prop:range},
together with $\big(\sum_k r_k\big)^2 \ge \sum_k r_k^2$ and
$\E[S^2] \ge 0$.
\end{proof}

\subsection{Format Constants}
\label{app:formats}

\paragraph{Integer profile.} A uniform grid with step $\Delta$ and
round-to-nearest leaves the error in $[-\Delta/2, \Delta/2]$; under the
uniform-in-cell model \citep{widrow1996statistical},
\begin{equation}
    \varphi(a) \;=\; \E[u^2]
    \;=\; \int_{-\Delta/2}^{\Delta/2} \frac{u^2}{\Delta}\,du
    \;=\; \frac{\Delta^2}{12},
\end{equation}
independent of $a$. A symmetric $b$-bit grid covering
$[-\mathrm{amax}, \mathrm{amax}]$ has $\Delta = 2\,\mathrm{amax}/2^{b}$:
the amax dependence of Eq.~\ref{eq:intlaw}.

\paragraph{Floating-point profile.} A format with $p$ mantissa bits
represents the exponent band $[2^{e}, 2^{e+1})$ with $2^p$ uniformly
spaced values and local step $\Delta_e = 2^{e-p}$; the step doubles at
each band boundary. For $a = m\cdot 2^{e}$ with $m \in [1,2)$, the
integer computation gives
\begin{equation}
    \E[\delta^2 \mid a] \;=\; \frac{\Delta_e^{2}}{12}
    \;=\; \frac{4^{\,e-p}}{12}
    \;=\; \frac{a^{2}}{12\,m^{2}}\,4^{-p},
    \qquad m = m(a) \in [1,2),
\end{equation}
using $4^{e} = a^2/m^2$. The profile is multiplicative with a bounded
within-band modulation; averaging the modulation over the operand's
within-band position,
\begin{equation}
    \varphi(a) \;=\; \sigma^2_{\mathrm{fmt}}\,a^{2},
    \qquad
    \sigma^2_{\mathrm{fmt}} \;=\; \frac{4^{-p}}{12}\,\E\!\left[
    \frac{1}{m^{2}}\right] \;\in\;
    \Big(\frac{4^{-p}}{48},\ \frac{4^{-p}}{12}\Big],
    \label{eq:sigmafmt}
\end{equation}
since $\E[1/m^2] \in (1/4, 1]$ on $m \in [1,2)$, the upper endpoint
requiring an operand distribution supported at $m = 1$.

\begin{proposition}[Intercept gaps are mantissa arithmetic]
\label{prop:interceptgap}
For two formats with $p_1$ and $p_2$ mantissa bits evaluated on the same
operand distribution (so the $\E[1/m^2]$ factor is shared),
\begin{equation}
    C_{\mathrm{fmt}_2} - C_{\mathrm{fmt}_1}
    \;=\; 10\log_{10}\!\big(4^{\,p_2 - p_1}\big)
    \;\approx\; 6.02\,(p_2 - p_1)\ \mathrm{dB}.
\end{equation}
\end{proposition}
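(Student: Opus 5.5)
The plan is to substitute the floating-point profile of Eq.~\ref{eq:sigmafmt} into the boxed law Eq.~\ref{eq:kappalaw} and difference the two intercepts. Every factor except the mantissa term $4^{-p}$ should cancel. First I fix what ``the intercept of a format'' means in the comparison. As in Eq.~\ref{eq:fmtgap} and the E4M3 computation of \S\ref{sec:derivation}, both operands are taken in the same format, so $\sigma_w^2 = \sigma_x^2 = \sigma^2_{\mathrm{fmt}}$. Then
\[
C_{\mathrm{fmt}} = -10\log_{10}\big(2\sigma^2_{\mathrm{fmt}}\big),
\qquad
\sigma^2_{\mathrm{fmt}} = \frac{4^{-p}}{12}\,\E\!\left[\frac{1}{m^{2}}\right].
\]
Since $\E[1/m^2] \in (1/4,1]$, this is a positive finite number, so the logarithm is well defined.

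Next I use the hypothesis that both formats see the same operand distribution. This should mean the within-band position $m(a)$ has the same law under both grids, so the factor $\E[1/m^2]$ is identical. Here I would note a subtlety. The band decomposition $a = m\,2^{e}$ with $m\in[1,2)$ is independent of $p$, because $p$ only subdivides each band. Hence $m$ is the same function of $a$ for both formats, and equality of the operand law gives equality of $\E[1/m^2]$. Writing
\[
K = \tfrac{2}{12}\,\E[1/m^2],
\]
I get $C_{\mathrm{fmt}_i} = -10\log_{10} K + 10\log_{10} 4^{p_i}$.

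Subtracting gives $C_{\mathrm{fmt}_2} - C_{\mathrm{fmt}_1} = 10\log_{10}4^{p_2-p_1} = 20\log_{10}2\,(p_2-p_1) \approx 6.0206\,(p_2-p_1)$~dB. This is the stated value.

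The main obstacle is the scope of the statement rather than the algebra. The law assumes both formats have the same scaling granularity and that the multiplicative profile holds, meaning no subnormal or saturation effects. Both caveats should be stated explicitly, pointing back to the $1.2$~dB confound discussed after Eq.~\ref{eq:fmtgap}.

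If the operands are instead allowed mixed formats, I would state the general version $C = -10\log_{10}(\sigma_w^2+\sigma_x^2)$. In that case the gap is exactly $6.02(p_2-p_1)$ only when both operands change mantissa width together. In that setting the common $4^{-p}$ still factors out of the sum, and the argument goes through unchanged.
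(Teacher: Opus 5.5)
Your proof is correct and is essentially the paper's own: both substitute Eq.~\ref{eq:sigmafmt} into $C_{\mathrm{fmt}}$, write $\sigma_w^2+\sigma_x^2 = 4^{-p_i}K$ with $K$ independent of the format, and difference the logarithms to get $10\,(p_2-p_1)\log_{10}4$. The only difference is cosmetic. Your main line sets $\sigma_w^2=\sigma_x^2$, which tacitly assumes weights and activations share the same $\E[1/m^2]$, whereas the paper's $K$ absorbs the two operands' factors separately; your closing remark that $4^{-p}$ factors out of the sum already covers that case.
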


\begin{proof}
By Eq.~\ref{eq:sigmafmt}, a shared operand distribution shares the
factor $\E[1/m^2]$, so $\sigma^2_w(p_i) + \sigma^2_x(p_i) = 4^{-p_i}K$
with $K$ independent of the format. With
$C_{\mathrm{fmt}_i} = -10\log_{10}\big(4^{-p_i}K\big)$,
\begin{equation}
    C_{\mathrm{fmt}_2} - C_{\mathrm{fmt}_1}
    = 10\log_{10}\!\frac{4^{-p_1}K}{4^{-p_2}K}
    = 10\,(p_2 - p_1)\log_{10}4
    \approx 6.02\,(p_2 - p_1)\ \mathrm{dB}. \qedhere
\end{equation}
\end{proof}

\subsection{Recovery of the Integer Theory (CAT)}
\label{app:cat}

CAT's Theorem~2.4 \citep{federici2026dissecting} reads
$\mathrm{SQNR} = 12\,\big[N(b_x)^2C(x) \,\|\, N(b_w)^2C(W)\big]\,A$ with
$a \,\|\, b = (a^{-1}+b^{-1})^{-1}$, per-tensor concentrations
$C(x) = \E\lVert x\rVert^2/\E[r(x)^2]$ and
$C(W) = \sum_o \lVert w^{(o)}\rVert^2 / \sum_o r(w^{(o)})^2$ over the
rows $w^{(o)}$ of $W$, and alignment
$A = \E\lVert Wx\rVert^2 / \big(\lVert W\rVert_F^2\,
\E\lVert x\rVert^2\big)$. Its assumption block --- cells of common width
$s = r/(2^{b}-1)$, error uniform on each cell and uncorrelated across
coordinates, $\E[\delta x\,\delta x^{\top}] = I\,\E[r^2]/12(2^{b}-1)^2$
--- is A1--A3 at the constant profile $\varphi = s^2/12$, written
against the quantization range $r$ in place of the amax. 

\paragraph{The two factors.} The per-tensor factor is
Eq.~\ref{eq:catfactors}: at $\Delta = r/N(b)$ the elementwise SQNR of
one tensor is $12\,N(b)^2\,C(\cdot)$. For the composition, sum
Eq.~\ref{eq:intlaw} over the $O$ rows of the cell with per-row weight
steps $\Delta_o = r(w^{(o)})/N(b_w)$ and activation step
$\Delta_x = r(x)/N(b_x)$, the activation range varying with the draw so
that $\E[\Delta_x^2] = \E[r(x)^2]/N(b_x)^2$:
\begin{equation}
    \E\lVert\delta S\rVert^2
    \;=\; \frac{\E[r(x)^2]}{12\,N(b_x)^2}\,\lVert W\rVert_F^2
    \;+\; \frac{\sum_o r(w^{(o)})^2}{12\,N(b_w)^2}\,\E\lVert x\rVert^2 .
\end{equation}
Dividing $\E\lVert Wx\rVert^2$ by each noise channel separately,
\begin{equation}
    \mathrm{SQNR}_x = 12\,N(b_x)^2\,C(x)\,A,
    \qquad
    \mathrm{SQNR}_w = 12\,N(b_w)^2\,C(W)\,A,
\end{equation}
and since the two channels are additive in noise power they compose
reciprocally in SNR,
$\mathrm{SQNR} = \mathrm{SQNR}_x \,\|\, \mathrm{SQNR}_w$.

\paragraph{Alignment and $\kappa$.} $A$ and $\kappa$ normalize the same
signal by different energies. For one row $w$, with
$\Sigma = \E[xx^\top]$,
\begin{equation}
    A \;=\; \frac{w^\top \Sigma\, w}{\lVert w\rVert^2\,\Tr\Sigma},
    \qquad
    \kappa \;=\; \frac{w^\top \Sigma\, w}
    {w^\top \mathrm{diag}(\Sigma)\, w},
    \qquad\text{so}\qquad
    \kappa \;=\; D\,A\cdot
    \frac{\lVert w\rVert^2\,\Tr\Sigma}
    {D\,w^\top \mathrm{diag}(\Sigma)\, w},
\end{equation}
and the correction factor is $1$ whenever $\mathrm{diag}(\Sigma)$ is
constant --- the equal-diagonal condition of
Theorem~\ref{thm:glceiling} --- in which case $\kappa = D\,A$ rowwise.
Under the matched ensemble $\E[ww^\top] = c\,\Sigma$ the relation needs
no condition: averaging numerator and denominator separately,
\begin{equation}
    \bar A \;=\; \frac{\E_w[w^\top\Sigma\, w]}
    {\E_w\lVert w\rVert^2\;\Tr\Sigma}
    \;=\; \frac{c\,\Tr(\Sigma^2)}{c\,(\Tr\Sigma)^{2}}
    \;=\; \frac{\kappa^{*}}{D}.
\end{equation}
CAT's alignment is rotation-invariant (their Eq.~(4)), as is
$\kappa^{*}$, a function of the spectrum of $\Sigma$ alone; the ceiling
of Theorem~\ref{thm:glceiling} is $D$ times the ensemble alignment of
the layer.

\subsection{The WUSH Construction}
\label{app:wush}

Working blockwise at block size $d$, \citet{chen2025wush} take $W'$ and
$X'$ with $W'W'^\top = d_{\mathrm{out}}^{-1} W W^\top$ and
$X'X'^\top = d_{\mathrm{batch}}^{-1} X X^\top$, and
\begin{equation}
    W'^\top X' \;=\; U S V^\top, \qquad S = \mathrm{diag}(s_1,\dots,s_d),
    \label{eq:wushsvd}
\end{equation}
the singular value decomposition. With $H$ a normalized Hadamard matrix,
their optimum (their Eq.~(7)) is
\begin{equation}
    T_{\mathrm{WUSH}} \;=\; H\, S^{-1/2}\, U^\top W'^\top ,
    \label{eq:wushopt}
\end{equation}
attaining a one-sided loss of $\E[\eta^2]\, d^{-1}(\operatorname{tr} S)^2$
(their Eq.~(20)). $\Sigma = \E[xx^\top]$ and the paired spectrum $S$ of
Eq.~\ref{eq:wushsvd} are distinct objects, and only the latter carries
the exponent $-1/2$; in the eigenvalues $\Lambda = S^2$ of
$W'^\top X' X'^\top W'$ the same exponent reads $-1/4$, the convention
of \S\ref{sec:kbbq_construction}.

Any transform that is orthogonal in the reparameterized coordinates, corresponding to
$T = R,W'^\top$ in the operand coordinates with $R$ orthogonal, leaves their
objective equal to $\E[\eta^2]\operatorname{tr}(S^2)$ (their Eq.~(21)).
Thus, the largest factor by which a transform can reduce floating-point error
within a block is
\begin{equation}
\frac{\operatorname{tr}(S^2)}
{d^{-1}(\operatorname{tr} S)^2}
;=;
\frac{d,\operatorname{tr}(S^2)}
{(\operatorname{tr} S)^2},
\label{eq:wushratio}
\end{equation}
as given by their Eqs.~(20)--(21). Their Eq.~(17) bounds this ratio in
$[1,d]$, with equality characterized by the endpoint conditions in
Remark~\ref{rem:sanity}.

Under the matched ensemble, the corresponding class has
$\bar\kappa=1$. Specifically, with
$T=\sqrt{c},R,\Sigma^{1/2}$, the weight profile is
$a_k=(T^{-\top}\Sigma T^{-1})*{kk}=c^{-1}$, while
$b_k=c,(R\Sigma^2R^\top)*{kk}$. Hence
$\sum_k a_kb_k=\operatorname{tr}(\Sigma^2)$ and $\bar\kappa=1$.
Therefore, Eq.~\ref{eq:wushratio} gives $\kappa^{*}$ relative to this
matched class, rather than relative to the untransformed layer.

\paragraph{Reduction under the matched ensemble.} We identify
Eq.~\ref{eq:wushratio} with the ceiling of Theorem~\ref{thm:glceiling}.

\emph{Step 1 (root factors).} A root factor is determined by its
defining identity up to a right orthogonal factor; take the symmetric
positive semidefinite representative. Then $X' = \Sigma^{1/2}$, and
under the matched-ensemble convention $\E[ww^\top] = c\,\Sigma$,
$W' = \sqrt{c}\,\Sigma^{1/2}$.

\emph{Step 2 (the paired object).} $W'^\top X' = \sqrt{c}\,\Sigma$ is
symmetric positive semidefinite, so Eq.~\ref{eq:wushsvd} may be taken
with $U = V$ the eigenvectors of $\Sigma$ and
$s_i = \sqrt{c}\,\lambda_i(\Sigma)$.

\emph{Step 3 (the constant cancels).} Substituting into
Eq.~\ref{eq:wushratio},
\begin{equation}
    \frac{d\,\operatorname{tr}(S^2)}{(\operatorname{tr} S)^2}
    = \frac{d\,\sum_i c\,\lambda_i^2}
           {\big(\sum_i \sqrt{c}\,\lambda_i\big)^{2}}
    = \frac{d\,\Tr(\Sigma^2)}{(\Tr\,\Sigma)^{2}},
\end{equation}
which is $\kappa^{*}$ in the dimension the two sides share: the
blockwise ceiling Eq.~\ref{eq:blockceiling} at block size $d$, and
Eq.~\ref{eq:glceiling} at $d = D$.

\emph{Step 4 (the optimum becomes orthogonal).} With
$U^\top \Sigma^{1/2} = \Lambda_\Sigma^{1/2}U^\top$,
\begin{equation}
    T_{\mathrm{WUSH}}
    = H\,S^{-1/2}\,U^\top W'^\top
    = H\,\big(c^{-1/4}\Lambda_\Sigma^{-1/2}\big)\,
      \big(\sqrt{c}\,\Lambda_\Sigma^{1/2}U^\top\big)
    = c^{1/4}\,H\,U^\top ,
\end{equation}
which is $\mathrm{diag}(s)\,R$ with $s = c^{1/4}\mathbf{1}$ and
$R = HU^\top$ orthogonal.

\emph{Step 5 (the equal-diagonal condition holds).} Membership in the
attainer class of Theorem~\ref{thm:glceiling} also requires
$(R\Sigma R^\top)_{kk} = \Tr(\Sigma)/d$ for every $k$. With
$R = HU^\top$ and $\Sigma = U\Lambda_\Sigma U^\top$,
\begin{equation}
    \big(R\Sigma R^\top\big)_{kk}
    = \big(H \Lambda_\Sigma H^\top\big)_{kk}
    = \sum_j H_{kj}^2\,\lambda_j
    = \frac{\Tr\Sigma}{d},
\end{equation}
since $H_{kj}^2 = 1/d$ for every $(k,j)$. Hence $T_{\mathrm{WUSH}}$
lies in the attainer class.

\subsection{Transforms: Invariance, Blindness, and the Ceiling}
\label{app:ceiling}

A function-preserving linear preprocessing of the GEMM re-embeds the
activations as $\hat x = Tx$ and the weights as $\hat w = T^{-\top}w$,
for invertible $T$, so that the transform can be folded into the two
operands and the network computes the same function.

\begin{lemma}[Numerator invariance]
\label{lem:invariance}
For every invertible $T$, the dot product is pointwise unchanged:
$\sum_k \hat w_k \hat x_k = w^\top T^{-1} T x = S$. In particular
$\E[S^2]$, the numerator of $\kappa$, is invariant under every
function-preserving linear transform, orthogonal or not.
\end{lemma}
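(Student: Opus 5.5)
The plan is a direct pointwise computation followed by taking expectations. No probabilistic argument is needed, because the claim holds for each realization of the operands. The transform acts as $\hat x = Tx$ and $\hat w = T^{-\top}w$, as set out at the start of this subsection. I would first write the transformed dot product in vector form as $\sum_k \hat w_k \hat x_k = \hat w^\top \hat x$. Then I would substitute the two definitions.

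The key step is the transpose identity $(T^{-\top}w)^\top = w^\top T^{-1}$. This holds because $(T^{-\top})^\top = T^{-1}$ for any invertible $T$; the transpose and the inverse commute. It gives
\[
\hat w^\top \hat x \;=\; w^\top T^{-1} T x \;=\; w^\top x \;=\; S,
\]
for every draw of $x$ and every weight vector $w$, whether fixed or drawn from an ensemble. Nothing here uses orthogonality. The only properties invoked are the existence of $T^{-1}$ and associativity of matrix multiplication, so the statement covers diagonal rescalings, rotations, and the non-orthogonal KBBQ transforms $T(\lambda)$ of Eq.~\ref{eq:kbbq} alike.

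For the second claim, pointwise equality $\hat S = S$ gives $\hat S^2 = S^2$ almost surely. Taking expectations over the data distribution, and over the weight ensemble when one is in force, then yields equal second moments. By Definition~\ref{def:kappa}, $\E[S^2]$ is exactly the numerator of $\kappa$, so this completes the lemma.

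There is no real obstacle. The one point needing care is to keep separate what is and is not invariant. The denominator $\sum_k \E[(\hat w_k \hat x_k)^2]$ does change with $T$, because the individual products $\hat w_k \hat x_k$ are not preserved, only their sum. I would state this explicitly so that the lemma is read as confining all of $T$'s effect on $\kappa$ to the denominator. That is the sense in which Theorem~\ref{thm:glceiling} later uses it.
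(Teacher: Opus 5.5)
Your proof is correct and is exactly the paper's argument: substitute $\hat x = Tx$ and $\hat w = T^{-\top}w$, use $(T^{-\top}w)^\top = w^\top T^{-1}$ to get $\hat w^\top \hat x = w^\top x$ for every realization, then conclude that pointwise equality gives equality of all moments, in particular $\E[S^2]$. One correction to your closing aside: the denominator changes only \emph{in general}, since for diagonal $T$ the individual products are also preserved (Proposition~\ref{prop:diagblind}), so in that case $\kappa$ itself is invariant.
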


\begin{proof}
$\hat w^\top \hat x = \big(T^{-\top}w\big)^{\!\top} (Tx)
= w^\top T^{-1} T\,x = w^\top x$ for every realization of $x$;
pointwise equality gives equality of all moments. 
\end{proof}

\begin{proposition}[Diagonal blindness]
\label{prop:diagblind}
If $T = \mathrm{diag}(t)$ with $t_k \ne 0$, every elementwise product is
pointwise unchanged: $\hat p_k = (w_k/t_k)(t_k x_k) = p_k$. Hence
$\kappa$, and with it the floating-point SNR of Eq.~\ref{eq:kappalaw},
is invariant under diagonal scaling. The integer law
(Eq.~\ref{eq:intlaw}) is not: the per-tensor amax values, and with them
the grid steps, move.
\end{proposition}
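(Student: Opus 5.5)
The plan is to verify the pointwise identity directly and then read off each consequence from results already in place. First I fix $T = \mathrm{diag}(t)$ with all $t_k \ne 0$, so $T$ is invertible and $T^{-\top} = \mathrm{diag}(1/t)$. Under the function-preserving re-embedding of \S\ref{app:ceiling}, the transformed operands are $\hat x_k = t_k x_k$ and $\hat w_k = w_k/t_k$. Their product is $\hat p_k = w_k x_k = p_k$ for every coordinate and every realization of $x$. This is stronger than Lemma~\ref{lem:invariance}, which only preserves the sum $S$; here each summand is preserved individually.

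From pointwise equality of each $p_k$, every moment of the products is unchanged. This covers both the denominator $\sum_k \E[p_k^2]$ and the numerator $\E[S^2]$, the latter also following from Lemma~\ref{lem:invariance}. Hence $\kappa$ of Definition~\ref{def:kappa} is invariant. Since $C_{\mathrm{fmt}}$ in Eq.~\ref{eq:kappalaw} depends only on the format constants $\sigma^2_w, \sigma^2_x$, the predicted SNR is invariant as well. I would also check the noise side directly. Under the multiplicative profile, Eq.~\ref{eq:fplaw} gives noise $(\sigma_w^2+\sigma_x^2)\sum_k \E[\hat p_k^2]$, which is unchanged because $\varphi(\hat w_k)\E[\hat x_k^2] = \sigma_w^2 w_k^2 t_k^{-2}\, t_k^2 \E[x_k^2]$. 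So the $t_k$ cancel inside Eq.~\ref{eq:functional} term by term, not merely in the ratio.

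For the integer claim, an existence statement suffices: the law is \emph{not} invariant. I would substitute into Eq.~\ref{eq:intlaw} with $\Delta_w \propto \max_k |w_k/t_k|$ and $\Delta_x \propto \mathrm{amax}(t\odot x)$. The noise then becomes $\tfrac{\Delta_w(t)^2}{12}\sum_k t_k^2\E[x_k^2] + \tfrac{\Delta_x(t)^2}{12}\sum_k \E[w_k^2]/t_k^2$. To exhibit dependence on $t$, I would take a two-coordinate example with an activation outlier, say $x_1$ large, and choose $t_1<1$, as in SmoothQuant. This lowers $\Delta_x$ and raises $\Delta_w$ unequally, so the total noise moves while $\E[S^2]$ is fixed.

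The main obstacle is only care about what "the grid steps move" means. The amax values are data-dependent for activations, so I would state it as a change in $\E[\Delta_x^2]$ and in $\Delta_w$, and give the explicit example rather than a general monotonicity claim.
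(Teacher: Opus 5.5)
Your argument is correct and follows the paper's proof: $T^{-\top}=\mathrm{diag}(1/t)$ gives $\hat p_k=p_k$ pointwise, so the numerator and denominator of $\kappa$ are unchanged and hence so is Eq.~\ref{eq:kappalaw}, while the integer steps move. Your explicit integer example would make the paper's ``in general'' precise, but you need to actually compute it with a genuinely non-uniform $t$, because ``lowers $\Delta_x$ and raises $\Delta_w$'' does not by itself move the total: uniform $t$, or the mirror swap $w=(1,1)$, $x=(10,1)$, $t=(0.1,1)$ at equal bit widths, moves both steps yet leaves the integer noise at $301$ (in units of $c^2/12$ with $\Delta=c\,\mathrm{amax}$), whereas $t_1=10^{-1/2}$ lowers it to $220$.
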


\begin{proof}
$T^{-\top} = \mathrm{diag}(1/t)$, so $\hat w_k = w_k/t_k$ and
$\hat x_k = t_k x_k$, and $\hat p_k = p_k$ pointwise. The numerator of
$\kappa$ is invariant by Lemma~\ref{lem:invariance}, and the
denominator $\sum_k \E[p_k^2]$ is a function of the products alone, so
$\kappa$ is invariant; Eq.~\ref{eq:kappalaw} depends on the operands
only through $\kappa$. The integer steps
$\Delta_w = 2\,\mathrm{amax}(\hat W)/2^{b_w}$ and
$\Delta_x = 2\,\mathrm{amax}(\hat X)/2^{b_x}$ change with non-unit
$|t_k|$ in general, and with them the integer SQNR of
Eq.~\ref{eq:intlaw}.
\end{proof}

\paragraph{The row ceiling under rotations.} For one output row $w$
with $\Sigma = \E[xx^\top]$, $\E[S^2] = w^\top \Sigma\, w$ and
$\sum_k \E[p_k^2] = \sum_k w_k^2\,\Sigma_{kk}
= w^\top \mathrm{diag}(\Sigma)\, w$, using no moment of $x$ beyond the
second. The substitution $v = \mathrm{diag}(\Sigma)^{1/2} w$
(invertible whenever every coordinate has nonzero energy) turns the
ratio into a Rayleigh quotient, so
\begin{equation}
    \kappa(w) \;=\;
    \frac{w^\top \Sigma\, w}{w^\top \mathrm{diag}(\Sigma)\, w}
    \;\le\; \max_{w}\ \kappa(w) \;=\; \lambda_{\max}\!\big(C_B\big),
    \qquad
    C_B \;=\; \mathrm{diag}(\Sigma)^{-1/2}\,\Sigma\,
    \mathrm{diag}(\Sigma)^{-1/2},
    \label{eq:rowceiling}
\end{equation}
the top eigenvalue of the input \emph{correlation} matrix
($(C_B)_{kk} = 1$). This is the ceiling a rotation can aim a single row
at.

\begin{theorem}[GL ceiling]
\label{thm:glceiling}
Let $\Sigma = \E[xx^\top]$ be positive definite and let the weight rows be
drawn from an ensemble matched to the input statistics,
$\E[ww^\top] = c\,\Sigma$ with $c > 0$.
Define the aggregate
participation factor of the transformed layer as
\begin{equation}
    \bar\kappa(T) \;=\;
    \frac{\E_{w}\big[\E[S^2]\big]}
    {\sum_k \E_w\big[\hat w_k^2\big]\;\hat\Sigma_{kk}},
    \qquad \hat\Sigma = T\Sigma T^\top .
\end{equation}
This is Definition~\ref{def:kappa} with numerator and denominator each
averaged over the weight ensemble, rather than formed for one fixed row;
the two coincide for a single row drawn from that ensemble. The
distinction matters for reading measurements: $\kappa^{*}$ bounds
$\bar\kappa$, the ensemble aggregate, and a per-cell $\kappa$ measured on
the actual weights of a trained layer is not an ensemble average and may
exceed it. Then for every invertible $T$,
\begin{equation}
    \bar\kappa(T) \;\le\; \kappa^{*}
    \;=\; \frac{D\,\Tr(\Sigma^2)}{(\Tr\,\Sigma)^{2}},
\end{equation}
and the bound is attained. The attainers are exactly the transforms $T
= \mathrm{diag}(s)\,R$ with $R$ orthogonal and $(R\Sigma
R^\top)_{kk} = \Tr(\Sigma)/D$ for all $k$; such an $R$ always exists,
and a randomized Hadamard attains the condition in expectation over
sign randomization.
\end{theorem}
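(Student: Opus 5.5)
The plan is to compute the numerator and denominator of $\bar\kappa(T)$ separately, then bound the denominator from below by two Cauchy--Schwarz steps.

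\emph{Numerator.} By Lemma~\ref{lem:invariance} the numerator does not depend on $T$:
\[
\E_w\big[\E[S^2]\big] = \E_w[w^\top\Sigma w] = c\,\Tr(\Sigma^2).
\]

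\emph{Denominator.} Since $\hat w = T^{-\top}w$, we have $\E_w[\hat w_k^2] = c\,a_k$ with $a_k = (T^{-\top}\Sigma T^{-1})_{kk}$. Write $b_k = \hat\Sigma_{kk} = (T\Sigma T^\top)_{kk}$. Then
\[
\bar\kappa(T) = \frac{\Tr(\Sigma^2)}{\sum_k a_k b_k},
\]
so the theorem reduces to the claim $\sum_k a_k b_k \ge (\Tr\Sigma)^2/D$, together with a characterization of when equality holds.

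\emph{The key step: factor both diagonals through a common trace.} Set $P = T\Sigma^{1/2}$ and $Q = T^{-\top}\Sigma^{1/2}$, and let $p_k, q_k$ denote their $k$-th rows. Then $b_k = \lVert p_k\rVert^2$ and $a_k = \lVert q_k\rVert^2$. The crucial identity is
\[
\sum_k p_k^\top q_k = \Tr(PQ^\top) = \Tr(T\Sigma T^{-1}) = \Tr\Sigma .
\]
This identity is exactly where function preservation (the pairing $T$, $T^{-\top}$) enters. With it, two Cauchy--Schwarz steps give
\[
D\sum_k a_kb_k \;\ge\; \Big(\sum_k \sqrt{a_kb_k}\Big)^2 \;\ge\; \Big(\sum_k |p_k^\top q_k|\Big)^2 \;\ge\; (\Tr\Sigma)^2 .
\]
The first inequality is Cauchy--Schwarz against the all-ones vector, the second is Cauchy--Schwarz on each row pair, and the third is the triangle inequality combined with the trace identity. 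Dividing into the numerator gives $\bar\kappa(T)\le\kappa^*$.

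\emph{Equality case.} Tightness of the row-wise Cauchy--Schwarz step requires $p_k = \alpha_k q_k$ with $\alpha_k > 0$. Because $\Sigma^{1/2}$ is invertible, this means $T = \mathrm{diag}(\alpha)\,T^{-\top}$, that is, $TT^\top = \mathrm{diag}(\alpha)$. Hence $T = \mathrm{diag}(s)R$ with $R$ orthogonal and $s_k = \sqrt{\alpha_k}$.

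For such $T$ we compute $b_k = s_k^2\,(R\Sigma R^\top)_{kk}$ and $a_k = s_k^{-2}\,(R\Sigma R^\top)_{kk}$. So $\sqrt{a_kb_k} = (R\Sigma R^\top)_{kk}$, and the scaling $s$ cancels. Tightness of the all-ones Cauchy--Schwarz step then requires these diagonal entries to be equal, and since they sum to $\Tr\Sigma$, each equals $\Tr\Sigma/D$. Conversely, any $T$ of this form makes all three inequalities equalities, so the attainer class is exactly the one stated.

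\emph{Existence and the randomized Hadamard.} Diagonalize $\Sigma = U\Lambda U^\top$ and take $R = HU^\top$, with $H$ a normalized Hadamard matrix. Then
\[
(R\Sigma R^\top)_{kk} = \sum_j H_{kj}^2\lambda_j = \Tr\Sigma/D,
\]
exactly as in Step~5 of Appendix~\ref{app:wush}. For dimensions where no Hadamard matrix exists, I would fall back on Schur--Horn: the constant vector is majorized by the spectrum, so some orthogonal $R$ realizes it as a diagonal.

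For a randomized Hadamard $R = HD_\epsilon$ applied directly to the operand coordinates, with $D_\epsilon$ a random sign matrix, the expected diagonal is
\[
\E_\epsilon\big[(HD_\epsilon\Sigma D_\epsilon H^\top)_{kk}\big] = \big(H\,\mathrm{diag}(\Sigma)\,H^\top\big)_{kk} = \Tr\Sigma/D,
\]
because the sign randomization kills the off-diagonal terms of $\Sigma$. This gives the "in expectation" claim.

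I expect the main obstacle to be recognizing the trace identity $\sum_k p_k^\top q_k = \Tr\Sigma$ that couples the two diagonals. Once it is in hand, the bound and the characterization of its equality case are routine.
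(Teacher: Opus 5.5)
Your proposal is correct and follows the paper's proof essentially step for step. The rows $p_k,q_k$ of $T\Sigma^{1/2}$ and $T^{-\top}\Sigma^{1/2}$ are exactly the paper's $\Sigma^{1/2}T^\top e_k$ and $\Sigma^{1/2}T^{-1}e_k$, your chain (all-ones Cauchy--Schwarz, row-wise Cauchy--Schwarz, triangle inequality with the trace identity) and equality analysis ($TT^\top$ diagonal, hence $T=\mathrm{diag}(s)R$, then constant diagonal of $R\Sigma R^\top$) are Steps~3--8, and positivity of $\alpha_k$ comes from $TT^\top\succ 0$ rather than from the row-wise equality condition itself. The differences are minor: you exhibit $R=HU^\top$ explicitly when a Hadamard matrix exists before falling back on Schur--Horn, and you verify the randomized-Hadamard claim via $\E_\epsilon[D_\epsilon\Sigma D_\epsilon]=\mathrm{diag}(\Sigma)$, which the paper only asserts.
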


\begin{proof}
The bound (Steps~1--6) uses only $\Sigma \succeq 0$; positive
definiteness enters from Step~4, where the invertibility of
$\Sigma^{1/2}$ converts the Cauchy--Schwarz equality condition into a
condition on $T$ alone. Under the deployment of
\S\ref{sec:kbbq_construction} the calibrated $\Sigma$ is positive
definite; where it is not, the bound stands and only the
characterization of the attainers requires restriction to the range of
$\Sigma$.

\emph{Step 1 (numerator is $T$-independent).} By
Lemma~\ref{lem:invariance}, $\E[S^2] = w^\top \Sigma\, w$ for each fixed
$w$ regardless of $T$, so
\begin{equation}
    \E_w\big[\E[S^2]\big]
    = \E_w\big[\Tr\!\big(\Sigma\, w w^\top\big)\big]
    = \Tr\!\big(\Sigma\,\E_w[ww^\top]\big)
    = c\,\Tr(\Sigma^2).
\end{equation}

\emph{Step 2 (the denominator as two diagonal profiles).} With $\hat w =
T^{-\top}w$, $\E_w\big[\hat w \hat w^\top\big] =
c\,T^{-\top}\Sigma\,T^{-1}$, so
$\E_w[\hat w_k^2] = c\,\big(T^{-\top}\Sigma\,T^{-1}\big)_{kk}$. Define
\begin{align}
    a_k &= \big(T^{-\top}\Sigma\,T^{-1}\big)_{kk}
        = \big\lVert \Sigma^{1/2}T^{-1}e_k\big\rVert^2,\\
    b_k &= \big(T\Sigma T^\top\big)_{kk}
        = \big\lVert \Sigma^{1/2}T^\top e_k\big\rVert^2,\nonumber
\end{align}
using $u^\top \Sigma\, u = \lVert\Sigma^{1/2}u\rVert^2$ at
$u = T^{-1}e_k$ and $u = T^\top e_k$. The denominator of
$\bar\kappa(T)$ is $c\sum_k a_k b_k$, so
\begin{equation}
    \bar\kappa(T) = \frac{\Tr(\Sigma^2)}{\sum_k a_k b_k},
\end{equation}
and the theorem reduces to $\sum_k a_k b_k \ge (\Tr\Sigma)^2/D$, with
equality characterized.

\emph{Step 3 (outer Cauchy--Schwarz).} Against the all-ones vector in
$\mathbb{R}^D$,
\begin{equation}
    \sum_k a_k b_k
    \;\ge\; \frac{1}{D}\Big(\sum_k \sqrt{a_k b_k}\Big)^{2},
\end{equation}
with equality iff all products $a_k b_k$ are equal.

\emph{Step 4 (inner Cauchy--Schwarz, coordinate by coordinate).} For
each $k$,
\begin{equation}
    \sqrt{a_k b_k}
    = \big\lVert \Sigma^{1/2}T^{-1}e_k\big\rVert\,
      \big\lVert \Sigma^{1/2}T^\top e_k\big\rVert
    \;\ge\; \big|\big\langle \Sigma^{1/2}T^{-1}e_k,\
      \Sigma^{1/2}T^\top e_k\big\rangle\big|
    = \big| e_k^\top T^{-\top}\Sigma\,T^\top e_k \big|,
\end{equation}
with equality iff
$\Sigma^{1/2}T^{-1}e_k \parallel \Sigma^{1/2}T^\top e_k$.

\emph{Step 5 (triangle inequality and the trace).} With
$z_k = e_k^\top T^{-\top}\Sigma\,T^\top e_k$,
\begin{equation}
    \sum_k \sqrt{a_k b_k}
    \;\ge\; \sum_k |z_k|
    \;\ge\; \Big| \sum_k z_k \Big|
    \;=\; \big|\Tr\big(T^{-\top}\Sigma\,T^\top\big)\big|
    \;=\; \Tr(\Sigma),
\end{equation}
by similarity invariance of the trace and
$\Tr(\Sigma) = \E\lVert x\rVert^2 \ge 0$. The second inequality is
tight iff all $z_k$ share one sign.

\emph{Step 6 (chain).} Both sides of Step~5 are nonnegative, so
squaring preserves the order, and with Step~3,
\begin{equation}
    \sum_k a_k b_k
    \;\ge\; \frac{1}{D}\,\big(\Tr\,\Sigma\big)^{2},
    \qquad\text{hence}\qquad
    \bar\kappa(T) \;\le\;
    \frac{D\,\Tr(\Sigma^2)}{(\Tr\Sigma)^{2}} = \kappa^{*}
\end{equation}
for every invertible $T$.

\emph{Step 7 (equality in Step~4 forces $T = \mathrm{diag}(s)\,R$).}
Equality throughout Step~4 requires
$\Sigma^{1/2}T^{-1}e_k = c_k\,\Sigma^{1/2}T^\top e_k$ for scalars
$c_k$; since $\Sigma \succ 0$, equivalently $T^{-1}e_k = c_k\,T^\top
e_k$ for each $k$, i.e.\ columnwise
\begin{equation}
    T^{-1} = T^\top\,\mathrm{diag}(c),
    \qquad c = (c_1, \dots, c_D).
\end{equation}
Left-multiplying by $T$ gives $T\,T^\top = \mathrm{diag}(c)^{-1}$,
symmetric positive definite, so each $c_k > 0$; Step~5 is then
automatically tight since $z_k = c_k b_k \ge 0$. Write
$s_k = c_k^{-1/2} > 0$, so $T\,T^\top = \mathrm{diag}(s)^2$, and set
$R = \mathrm{diag}(s)^{-1}T$; then $R\,R^\top = I$ and
$T = \mathrm{diag}(s)\,R$. Conversely, any $T$ of this form satisfies
$T^{-1} = T^\top\,\mathrm{diag}(s)^{-2}$, so the parallelism holds for
every $k$: the inner step is tight exactly on this class.

\emph{Step 8 (equality in Step~3 is the equal-diagonal condition).} On
the class $T = \mathrm{diag}(s)\,R$, from
$T^{-\top} = \mathrm{diag}(s)^{-1}R$,
\begin{equation}
    a_k = \frac{\big(R\Sigma R^\top\big)_{kk}}{s_k^{2}},
    \qquad
    b_k = s_k^{2}\,\big(R\Sigma R^\top\big)_{kk},
    \qquad\text{so}\qquad
    a_k b_k = \Big(\big(R\Sigma R^\top\big)_{kk}\Big)^{2}.
\end{equation}
Step~3's equality condition is therefore that
$\big(R\Sigma R^\top\big)_{kk}$ is constant in $k$; since the diagonal
sums to $\Tr(\Sigma)$ by cyclicity, the constant is $\Tr(\Sigma)/D$.

\emph{Step 9 (existence, and the attained value).} An orthogonal $R$
with $\big(R\Sigma R^\top\big)_{kk} = \Tr(\Sigma)/D$ exists by the
Schur--Horn theorem: the constant vector $(\Tr\Sigma/D)\mathbf{1}$ is
majorized by the eigenvalue vector of $\Sigma$. A randomized Hadamard
attains the condition in expectation over sign randomization. For such
an $R$, and any $\mathrm{diag}(s)$ (free by Step~8's cancellation and
consistent with Proposition~\ref{prop:diagblind}),
\begin{equation}
    \sum_k a_k b_k
    = D\cdot\frac{(\Tr\Sigma)^2}{D^2}
    = \frac{(\Tr\Sigma)^{2}}{D},
    \qquad\text{hence}\qquad
    \bar\kappa
    = \frac{D\,\Tr(\Sigma^2)}{(\Tr\Sigma)^{2}}
    = \kappa^{*}.
\end{equation}
\end{proof}

\begin{corollary}[Blockwise ceiling]
\label{cor:blockceiling}
Let $T$ be block diagonal with blocks of size $d$, as in the construction of
\S\ref{sec:kbbq_construction}, and let $\Sigma_b$ be the corresponding
$d \times d$ diagonal block of $\Sigma$. Then within each block
\begin{equation}
    \bar\kappa_b(T_b) \;\le\; \kappa_b^{*}
    \;=\; \frac{d\,\Tr(\Sigma_b^2)}{(\Tr\,\Sigma_b)^{2}},
    \label{eq:blockceiling}
\end{equation}
with the attainers characterized as in Theorem~\ref{thm:glceiling} with $D$
replaced by $d$ and $\Sigma$ by $\Sigma_b$.
\end{corollary}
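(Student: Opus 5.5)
The plan is to reduce the corollary to Theorem~\ref{thm:glceiling} applied block by block. A block-diagonal $T = \mathrm{diag}(T_1,\dots,T_B)$ with invertible $d\times d$ blocks acts on the activation coordinates of block $b$ only through $T_b$. So $\hat x_b = T_b x_b$ and $\hat w_b = T_b^{-\top} w_b$, where $x_b, w_b$ are the restrictions of $x, w$ to the block's coordinates.

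The first step is to make the per-block quantities precise. Define the block partial dot product $S_b = \sum_{k\in b} w_k x_k$. The blockwise aggregate participation factor is
\[
\bar\kappa_b(T_b) = \frac{\E_w\big[\E[S_b^2]\big]}{\sum_{k\in b}\E_w[\hat w_k^2]\,(T_b\Sigma_b T_b^\top)_{kk}}.
\]
This is exactly the quantity of Theorem~\ref{thm:glceiling} for the $d$-dimensional dot product $S_b$. The block's quantizer group sees only these coordinates, which is the setting of \S\ref{sec:kbbq_construction}.

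The second step is to check the hypotheses of the theorem at dimension $d$:
\begin{itemize}
\item \emph{Positive definiteness.} $\E[x_b x_b^\top] = \Sigma_b$ is a principal submatrix of $\Sigma$. It is therefore positive definite whenever $\Sigma$ is, since $u^\top\Sigma_b u = \tilde u^\top \Sigma \tilde u$ for the zero-padded $\tilde u$.
\item \emph{Matched ensemble.} $\E[w_b w_b^\top]$ is the corresponding principal block of $\E[ww^\top] = c\,\Sigma$, namely $c\,\Sigma_b$.
\item \emph{Numerator invariance.} Lemma~\ref{lem:invariance} applies to $S_b$ with $T_b$, because $\hat w_b^\top \hat x_b = w_b^\top x_b$ pointwise.
\end{itemize}
All hypotheses therefore hold with $D \to d$ and $\Sigma \to \Sigma_b$. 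Steps~1--9 of the proof of Theorem~\ref{thm:glceiling} then go through verbatim: the numerator equals $c\Tr(\Sigma_b^2)$, and the denominator is $c\sum_k a_k b_k \ge c(\Tr\Sigma_b)^2/d$. The attainers are $T_b = \mathrm{diag}(s)R_b$ with $(R_b\Sigma_b R_b^\top)_{kk} = \Tr(\Sigma_b)/d$. Such an $R_b$ exists by Schur--Horn applied to $\Sigma_b$.

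The main obstacle is not algebraic but interpretive. One must justify that the per-block quantity is the right object, given that cross-block covariance $\Sigma_{bb'}$ is ignored. I would note that a block-diagonal $T$ cannot mix blocks, so $\Sigma_{bb'}$ enters neither $\hat\Sigma_{kk}$ nor $\E_w[\hat w_k^2]$ for $k\in b$, and the statement is purely about the block-restricted dot product.

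One should not claim the full-layer $\bar\kappa(T)$ is bounded by combining the $\kappa_b^*$, since the full numerator contains cross-block terms. If a global statement is wanted, the natural extra remark is the following. The full denominator is the sum of the block denominators, so $\bar\kappa(T)$ is minimized in denominator exactly when each block attains its ceiling. This gives $\sum_b c(\Tr\Sigma_b)^2/d$ as the minimal denominator over block-diagonal $T$. I would state this only as a remark, keeping the corollary itself to the per-block claim.
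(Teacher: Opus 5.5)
Your proposal is correct and follows the paper's own argument. The block-diagonal pair $T=\bigoplus_b T_b$, $T^{-\top}=\bigoplus_b T_b^{-\top}$ acts on each block independently, the restricted ensemble is matched with $\E[w_b w_b^\top]=c\,\Sigma_b$, and Theorem~\ref{thm:glceiling} is then applied in dimension $d$; the paper phrases this last step as noting that $D$ enters only through Step~3's all-ones vector and the equal-diagonal constant of Steps~8--9. Your extra check that the principal submatrix $\Sigma_b$ is positive definite, and your closing remark that the full-layer denominator splits blockwise while the numerator keeps cross-block terms, are both correct additions that the paper does not spell out.
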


\begin{proof}
For $T = \bigoplus_b T_b$ with invertible blocks, $T^{-\top} =
\bigoplus_b T_b^{-\top}$, so the transform pair acts within each block
independently: restricted to block $b$ it is the function-preserving
pair of \S\ref{app:ceiling} in dimension $d$, with second moment
$\Sigma_b$ and matched ensemble $\E[w_b w_b^\top] = c\,\Sigma_b$. The
dimension enters the proof of Theorem~\ref{thm:glceiling} only through
the all-ones vector of Step~3 and the equal-diagonal constant of
Steps~8--9, both of which read $d$ and $\Tr(\Sigma_b)/d$ in dimension
$d$; the Schur--Horn argument is dimension-free. Applying the theorem
with these substitutions gives Eq.~\ref{eq:blockceiling}.
\end{proof}

\begin{remark}[Which ceiling the method attains]
The two ceilings are different objects: $\kappa^{*} \in [1, D]$ and
$\kappa_b^{*} \in [1, d]$, both equal to $1$ for isotropic inputs, and
their ratio approaches $D/d$ ($256$ at $D = 4096$, $d = 16$) only in the
rank-one limit. The layer-level
$\kappa^{*}$ of Theorem~\ref{thm:glceiling} is the special case $d = D$ and is
available only to a dense transform. The construction of
\S\ref{sec:kbbq_construction} is blockwise with the transform block matched to
the quantization group, so the quantity it attains, and the quantity
$\lambda$ interpolates toward, is the per-block $\kappa_b^{*}$ of
Eq.~\ref{eq:blockceiling}. Statements in the main text about ``the ceiling''
in the context of the deployed transform should be read blockwise; the
layer-level form is used only where the comparison is with a dense transform,
as in \S\ref{sec:wushconnection}.
\end{remark}

\begin{remark}[Relation to the transform-optimality theorem]
The attaining class contains the flatten-then-Hadamard constructions of
\citet{chen2025wush}: under the matched ensemble their Theorem~4.1 optimum
is, in $\kappa$ coordinates, the statement that $\kappa^{*}$ is attainable.
\S\ref{app:wush} carries out the reduction.
\end{remark}

\begin{remark}[Why the ensemble matters]
For a \emph{single fixed} row the supremum of $\kappa$ over invertible $T$ is
$D$, but the maximizing sequence degenerates: equality forces the rows of $T$
toward a common direction, i.e.\ $T$ toward rank one. The ceiling
$\kappa^{*} \le D$ is the simultaneous constraint, one $T$ serving every row
at once, and is the operationally relevant one, since a preprocessing
transform is applied to the layer and not to a row. 
\end{remark}

\begin{remark}[Sanity limits]
\label{rem:sanity}
For isotropic inputs, $\Sigma = \lambda I$ gives
$\kappa^{*} = D\cdot D\lambda^2/(D\lambda)^2 = 1$: no
function-preserving linear preprocessing can raise the participation
factor of a whitened layer. For rank-one--dominated inputs,
$\Sigma \approx \lambda_1 u u^\top$ with $\lVert u\rVert = 1$ gives
$\Tr(\Sigma^2) \approx (\Tr\Sigma)^2 \approx \lambda_1^2$, so
$\kappa^{*} \to D$. 
\end{remark}